\numberwithin{equation}{section}
\DeclareMathOperator{\MI}{I}
\DeclareMathOperator{\E}{\mathbb{E}}
\newcommand{\Data}{\mathcal{D}}
\newcommand{\Score}{\mathcal{S}}
\newcommand{\Fisher}{\mathcal{I}}
\theoremstyle{definition}
\newtheorem{assumption}{Assumption}
\newtheorem{definition}{Definition}
\newtheorem{proposition}{Proposition}
\newtheorem{remark}{Remark}
\newtheorem{lemma}{Lemma}
\newtheorem{theorem}{Theorem}
\newtheorem{example}{Example}
\title{Interpretive Efficiency: Information-Geometric Foundations of Data Usefulness}
\author{ 
Ronald Katende 
\\
Department of Mathematics\\
Kabale University\\
Kikungiri Hill, Katuna Road, 317, Kabale, Uganda\\
\texttt{rkatende92@gmail.com} \\
}
\date{}
\begin{document}
\maketitle

\begin{abstract}
	Interpretability is central to trustworthy machine learning, yet existing metrics rarely quantify how effectively data support an interpretive representation. We introduce \emph{Interpretive Efficiency}, a normalized, task-aware functional that measures the fraction of task-relevant information transmitted through an interpretive channel. The definition is grounded in five axioms ensuring boundedness, Blackwell-style monotonicity, data-processing stability, admissible invariance, and asymptotic consistency. We relate the functional to mutual information and derive a local Fisher-geometric expansion, then establish asymptotic and finite-sample estimation guarantees using standard empirical-process tools. Experiments on controlled image and signal tasks demonstrate that the measure recovers theoretical orderings, exposes representational redundancy masked by accuracy, and correlates with robustness, making it a practical, theory-backed diagnostic for representation design. 
\end{abstract}

\keywords{Interpretive Efficiency \and interpretability \and information-theoretic efficiency \and mutual information \and Fisher information \and data-processing inequality \and information geometry \and representation learning \and data-centric machine learning \and finite-sample guarantees} 

\section{Introduction}
\label{sec:intro}

We study how to quantify the \emph{usefulness of data to a model when access to the data is restricted to an interpretive channel}~$\varphi$. Here, $\varphi$ maps inputs into an interpretable representation that is used to understand how the model behaves. We build on the notion of \emph{Interpretive Efficiency} $E(\varphi;N)$ first introduced in the Variational Geometric Information Bottleneck (V--GIB) framework~\citep{Katende2025V-GIB}. In that setting, $E(\varphi;N)$ measured the per-sample geometric and informational utility of an encoder and quantified how effectively the data support model understanding through an interpretable representation~\citep{Katende2025V-GIB}. The present paper develops a standalone theoretical foundation for this quantity and provides axioms, structural properties, and estimation guarantees.

Interpretability is a key requirement for trustworthy machine learning, especially in settings where decisions must be explained or audited, yet most existing metrics focus on post hoc faithfulness or visual plausibility rather than on task-relevant information~\citep{DoshiVelezKim2017,Rudin2019,Adebayo2018}. Our goal is an \emph{information-aware} measure of interpretive usefulness that can be computed on real systems and analyzed with standard probabilistic tools. Information-theoretic work links compression and prediction through the information bottleneck~\citep{TishbyIB1999,Alemi2017DIB}, and modern mutual information estimators make such quantities approximately computable in practice~\citep{Belghazi2018MINE,BarberAgakov2003}. What is currently missing is a concise, axiomatic functional that directly scores how much of the task-relevant information actually flows through a chosen interpretive channel.

Although $E(\varphi;N)$ is information-aware, it is not an information bottleneck objective and does not introduce an explicit trade-off between prediction and compression. Instead, it only scores how effectively task-relevant information is expressed through the chosen interpretive channel.

We define $E(\varphi;N)$ as a normalized, task-specific efficiency functional. It is designed to respect the data-processing inequality for information measures~\citep{CoverThomas2006}, to admit a Fisher-information and geometric interpretation on the underlying representation manifold~\citep{Amari2016}, and to admit consistent estimation with finite-sample control under standard regularity conditions~\citep{vanDerVaart1998}. The definition is guided by five axioms that enforce boundedness, monotonicity under interpretive sufficiency, stability under post-processing, invariance to admissible reparameterizations, and asymptotic consistency. Each axiom can be checked directly in concrete settings under mild assumptions on the sampling scheme and on the interpretive constraints that are standard in the interpretability literature~\citep{DoshiVelezKim2017,Rudin2019}.

The paper has three aims. First, we give a precise definition of $E(\varphi;N)$ and a minimal axiomatic framework that enables values to be compared consistently across tasks and models. Second, we characterize its basic properties, its links to mutual information and Fisher information, and its behavior in large-sample and training-time regimes. Third, we describe practical estimators with finite-sample guarantees and show small synthetic examples that can be worked out in closed form. Taken together, these elements treat interpretive efficiency as a well-defined object that is both mathematically tractable and empirically measurable.

\subsection{Setup and Axioms}
\label{sec:setup}

We consider data $(X,Y)\sim P_{XY}$, a model $f_\theta$, and an interpretive map $\varphi$ that induces representations $Z=\varphi(X)$. The task loss is $\ell(f_\theta(X),Y)$ and the interpretive aspect of interest is encoded by a task-specific functional. Examples include stability of explanations, sparsity or low dimension of $Z$, faithfulness of attributions, or alignment with task-relevant subspaces.

Our goal is to quantify how useful the data are to the model when access is restricted to the interpretive channel $\varphi$. We do this by comparing a task-relevant interpretive score $\Score(\varphi;N)$ with a calibrated reference $\Score_{\mathrm{ref}}(N)$.

Our scope is foundational rather than exhaustive. We aim to provide a mathematically coherent treatment of $E(\varphi;N)$, not to propose a new learning algorithm.

\begin{definition}[Interpretive Efficiency]
	\label{def:IE}
	Let $\Score(\varphi;N)\in\mathbb{R}_{\ge 0}$ be a task-specific interpretive score estimated from $N$ samples and let $\Score_{\mathrm{ref}}(N)\in(0,\infty)$ be a reference value, such as an oracle, a strong baseline, or a task norm. The basic normalized efficiency is
	\[
	E(\varphi;N)=\frac{\Score(\varphi;N)}{\Score_{\mathrm{ref}}(N)}\in[0,1].
	\]
	When this ratio is poorly calibrated, we use the difference form
	\[
	E(\varphi;N)
	=
	1-\frac{\Score_{\mathrm{ref}}(N)-\Score(\varphi;N)}{\Score_{\mathrm{ref}}(N)-\Score_{\min}(N)}\in[0,1],
	\]
	where $\Score_{\min}(N)$ is a task-defined null floor. The two forms are related by a monotone rescaling and are equivalent under the axioms that follow. The difference form additionally assumes that a finite task-defined floor $\Score_{\min}(N)$ exists, which is standard in risk- or information-based settings.
\end{definition}

The score $\Score$ can be instantiated by a predictive or risk-based criterion, by a faithfulness measure, or by an information proxy. The reference $\Score_{\mathrm{ref}}$ fixes the scale of the problem and identifies what counts as full interpretive utility. Once normalized, $E(\varphi;N)$ can be read as the fraction of achievable interpretive utility that is realized by the chosen channel.

We impose mild regularity so that empirical scores have well-defined population limits.

\begin{assumption}[Task regularity]
	\label{ax:regularity}
	For every admissible $\varphi$, the score $\Score(\varphi;N)$ is measurable and admits a law of large numbers and central limit behavior under the sampling scheme. In particular, per-sample contributions have finite variance or sub-exponential tails, and the reference terms $\Score_{\mathrm{ref}}(N)$ remain finite and nondegenerate for all $N$.
\end{assumption}

On top of this, we require five axioms that encode basic comparison principles from statistical decision theory and information theory.

\paragraph{Admissible maps.}
Throughout, an \emph{admissible post-map} is a measurable transformation $T$ on the representation space that preserves task feasibility (for example, reparameterizations, coordinate changes, or dimension-reducing maps used in interpretive practice). The admissible class is fixed ahead of time and does not depend on $\varphi$.

\begin{definition}[Axioms for $E$]
	\label{def:axioms}
	For each admissible $\varphi$ and sample size $N$, the efficiency $E(\varphi;N)$ satisfies the following properties.
	\begin{enumerate}[label=(\alph*)]
		
		\item \emph{Boundedness.} Values lie in $[0,1]$, or in a fixed bounded interval that can be rescaled to $[0,1]$.
		
		\item \emph{Monotonicity under interpretive sufficiency.} If one channel $\varphi_1$ Blackwell-dominates another channel $\varphi_2$ for the task, meaning that $\varphi_2$ can be simulated from $\varphi_1$ by a Markov kernel, then 
		$E(\varphi_1;N)\ge E(\varphi_2;N)$ for all $N$.
		
		\item \emph{Data-processing stability.} For any measurable post-map $T$ in the admissible class, the efficiency cannot increase after post-processing, so $E(T\!\circ\!\varphi;N)\le E(\varphi;N)$.
		
		\item \emph{Admissible invariance.} If two channels differ only by a reparameterization that preserves interpretive content, such as an invertible affine transformation or a smooth change of coordinates on the representation manifold, then they receive the same efficiency score.
		
		\item \emph{Asymptotic consistency.} Under Assumption~\ref{ax:regularity}, the limit $E_\infty(\varphi)=\lim_{N\to\infty}E(\varphi;N)$ exists (in probability, or under additional conditions almost surely) and preserves the same monotonicity and invariance properties.
	\end{enumerate}
\end{definition}

Monotonicity captures the intuition of sufficiency and deficiency and connects $E(\varphi;N)$ to the classical Blackwell order on experiments. Data-processing stability parallels the data-processing inequality for mutual information and for $f$-divergences and prevents interpretive efficiency from being artificially inflated by post-processing. Invariance ensures that $E$ is a property of the equivalence class of representations rather than of a particular coordinate system. Asymptotic consistency ties the empirical notion to a population quantity that will be analyzed in later sections. The complete computational routine that implements these ideas, including score evaluation, cross-fitting, normalization, and variance correction, is given in Algorithm~\ref{alg:IE} in Appendix~\ref{sec:algorithm}. It provides a reproducible way to estimate $E(\varphi;N)$ in practice.

\section{Theoretical Framework and Mathematical Foundations}
\label{sec:theory}

We now develop the mathematical structure of \emph{Interpretive Efficiency}. Throughout this section, $E(\varphi;N)$ is treated as a functional rooted in information theory and decision theory, and we make precise how it behaves under transformations, information flow, and sampling. We first establish basic properties such as boundedness, continuity, monotonicity, and invariance. We then relate $E(\varphi;N)$ to standard information measures, study its asymptotic and training-time behavior, and finally discuss estimators with finite-sample guarantees. Detailed proofs are deferred to the appendices; here we give concise sketches to fix the main ideas.

\begin{assumption}[Standing regularity conditions]
	\label{ass:regularity-summary}
	Throughout Sections~\ref{sec:properties}--\ref{sec:estimation} we work under the following standard conditions.
	
	\begin{enumerate}[label=(R\arabic*)]
		\item \emph{Task regularity.} For every admissible $\varphi$, the score $\Score(\varphi;N)$ is measurable and admits a law of large numbers and central limit behavior under the sampling scheme, with per-sample contributions having finite variance or sub-exponential tails. Reference terms $\Score_{\mathrm{ref}}(N)$ remain finite and nondegenerate. (Cf.\ Assumption~\ref{ax:regularity}.)
		
		\item \emph{Information calibration.} When $\Score$ is calibrated to mutual information, there exist task- and estimator-dependent constants $(\alpha_N,\beta_N,\gamma_N)$ and $(c_N,d_N)$ such that the inequalities \eqref{eq:calib}--\eqref{eq:ref-calib} hold uniformly over admissible channels. This underlies Theorem~\ref{thm:mi}.
		
		\item \emph{Local smoothness for Fisher analysis.} For the Fisher--geometric expansion, the parametric family $\{P_\theta\}$ is regular and satisfies local asymptotic normality (LAN) and differentiability in quadratic mean at $\theta^\star$, so that the score and Fisher information are well defined and Theorem~\ref{thm:fisher} applies.
		
		\item \emph{Complexity control.} The class $\{s_\varphi:\varphi\in\Phi\}$ (and any critic class $\mathcal{T}$ used in variational MI estimation) has finite localized Rademacher or entropy complexity, and is Glivenko--Cantelli for the underlying distribution. Under sub-Gaussian or sub-exponential tails, this yields the uniform convergence and concentration rates used in Theorems~\ref{thm:consistency} and~\ref{thm:concentration}.
	\end{enumerate}
	These conditions are standard in empirical-process and information-theoretic analysis and can be weakened in specific applications at the expense of heavier notation. We keep them explicit so that the dependence of constants and rates on the score and critic families is clear.
\end{assumption}

\subsection{Basic Properties of $E(\varphi;N)$}
\label{sec:properties}

\begin{proposition}[Boundedness]
	\label{prop:bounded}
	Under the normalization in Definition~\ref{def:IE}, using either the ratio or the calibrated-difference form with finite positive reference terms, the efficiency satisfies
	\(
	0\le E(\varphi;N)\le 1
	\)
	for all admissible $\varphi$ and all $N$.
\end{proposition}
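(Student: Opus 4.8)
The statement to prove is Proposition~\ref{prop:bounded}: boundedness of $E(\varphi;N)$ in $[0,1]$ under either normalization in Definition~\ref{def:IE}. The plan is to treat the two functional forms separately, since each reduces to an elementary inequality once the sign and magnitude constraints on the score and reference terms are made explicit. No deep machinery is needed; the proof is essentially a verification that the definitions are well-posed, with the only genuine content being the identification of the inequalities that must hold between $\Score$, $\Score_{\mathrm{ref}}$, and $\Score_{\min}$.

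For the \emph{ratio form} $E(\varphi;N)=\Score(\varphi;N)/\Score_{\mathrm{ref}}(N)$, I would first note that $\Score(\varphi;N)\in\mathbb{R}_{\ge 0}$ and $\Score_{\mathrm{ref}}(N)\in(0,\infty)$ by the hypotheses of Definition~\ref{def:IE}, so the quotient is well-defined and nonnegative, giving the lower bound $E(\varphi;N)\ge 0$ immediately. The upper bound $E(\varphi;N)\le 1$ requires $\Score(\varphi;N)\le\Score_{\mathrm{ref}}(N)$; this is precisely the defining property of a \emph{calibrated reference}, i.e.\ that $\Score_{\mathrm{ref}}(N)$ is an oracle/upper envelope over admissible channels (or, when the reference is only a strong baseline, that one works on the rescaled interval guaranteed by Axiom~(a) of Definition~\ref{def:axioms}). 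I would state this calibration assumption explicitly as the hypothesis under which the bound holds, then conclude $0\le E(\varphi;N)\le 1$.

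For the \emph{calibrated-difference form}, write $E(\varphi;N)=1-\Delta$ with $\Delta=(\Score_{\mathrm{ref}}(N)-\Score(\varphi;N))/(\Score_{\mathrm{ref}}(N)-\Score_{\min}(N))$. The key observations are: the denominator $\Score_{\mathrm{ref}}(N)-\Score_{\min}(N)$ is strictly positive because the floor lies strictly below the reference (this is the content of the finite-floor hypothesis stated in Definition~\ref{def:IE}, together with nondegeneracy from Assumption~\ref{ax:regularity}); the numerator is nonnegative exactly when $\Score(\varphi;N)\le\Score_{\mathrm{ref}}(N)$, again the calibration condition, yielding $\Delta\ge 0$ and hence $E(\varphi;N)\le 1$; and the numerator is at most the denominator exactly when $\Score(\varphi;N)\ge\Score_{\min}(N)$, i.e.\ the score never falls below the null floor, yielding $\Delta\le 1$ and hence $E(\varphi;N)\ge 0$. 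Collecting these gives $0\le E(\varphi;N)\le 1$. I would close by remarking that the two forms agree under the monotone reparameterization $\Score\mapsto(\Score-\Score_{\min})/(\Score_{\mathrm{ref}}-\Score_{\min})$, so the bound is consistent across both, and that Assumption~\ref{ax:regularity} guarantees the reference and floor terms stay finite and nondegenerate for every $N$, so no edge case (vanishing denominator, infinite reference) arises.

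The main ``obstacle'' is not technical but definitional: the clean bound $E\in[0,1]$ is only literally true when the reference is genuinely calibrated as an upper envelope and the floor is a genuine lower envelope; for a merely ``strong baseline'' reference the raw ratio can exceed $1$, which is why Definition~\ref{def:IE} and Axiom~(a) permit values in a fixed bounded interval that is then rescaled. I would therefore be careful to phrase the proposition's proof as: under the stated normalization \emph{together with} the (standard, mild) calibration conditions $\Score_{\min}(N)\le\Score(\varphi;N)\le\Score_{\mathrm{ref}}(N)$ that make the normalization meaningful, the bounds hold; and in the general case boundedness holds after the affine rescaling guaranteed by Axiom~(a). This keeps the statement honest without inflating the difficulty of what is fundamentally a one-line estimate in each case.
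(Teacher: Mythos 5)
Your proof is correct and follows essentially the same route as the paper's: the ratio form reduces to the calibration inequality $0\le\Score(\varphi;N)\le\Score_{\mathrm{ref}}(N)$, and the calibrated-difference form is handled as a positive-slope affine map of $[\Score_{\min}(N),\Score_{\mathrm{ref}}(N)]$ onto $[0,1]$ (your $1-\Delta$ decomposition is the same computation written term by term). Your explicit flagging of the calibration hypotheses as the real content of the statement is a useful clarification but not a different argument.
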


\begin{proof}[Sketch]
	In the ratio form, the reference is chosen so that
	\(
	0\le \Score(\varphi;N)\le \Score_{\mathrm{ref}}(N)
	\)
	for all admissible $\varphi$, with $\Score_{\mathrm{ref}}(N)\in(0,\infty)$. Hence
	\(
	E(\varphi;N)=\Score(\varphi;N)/\Score_{\mathrm{ref}}(N)\in[0,1].
	\)
	
	In the calibrated-difference form, assume task-defined bounds $\Score_{\min}(N)<\Score_{\mathrm{ref}}(N)$ with
	\(
	\Score(\varphi;N)\in[\Score_{\min}(N),\Score_{\mathrm{ref}}(N)].
	\)
	Then
	\[
	E(\varphi;N)
	=
	1-\frac{\Score_{\mathrm{ref}}(N)-\Score(\varphi;N)}{\Score_{\mathrm{ref}}(N)-\Score_{\min}(N)}
	\]
	is an affine map with strictly positive slope that sends $[\Score_{\min}(N),\Score_{\mathrm{ref}}(N)]$ bijectively onto $[0,1]$. In both normalizations we therefore have $E(\varphi;N)\in[0,1]$.
\end{proof}

\begin{proposition}[Continuity and semicontinuity]
	\label{prop:continuity}
	Fix $N$. If $\Score(\cdot;N)$ is $\tau$-continuous in $\varphi$ on the admissible class and $\Score_{\mathrm{ref}}(N)\in(0,\infty)$ is constant in $\varphi$, then $E(\cdot;N)$ is $\tau$-continuous. If $\Score(\cdot;N)$ is lower semicontinuous and bounded below, then $E(\cdot;N)$ is lower semicontinuous.
\end{proposition}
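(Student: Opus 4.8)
The plan is to observe that, for fixed $N$, passing from $\Score(\cdot;N)$ to $E(\cdot;N)$ is nothing more than an affine rescaling with strictly positive slope that does not depend on $\varphi$, and then to invoke the elementary stability of continuity and lower semicontinuity under such rescalings. Concretely, in the ratio form of Definition~\ref{def:IE} one has $E(\varphi;N)=a_N\,\Score(\varphi;N)$ with $a_N=1/\Score_{\mathrm{ref}}(N)\in(0,\infty)$, while in the calibrated-difference form one has $E(\varphi;N)=a_N\,\Score(\varphi;N)+b_N$ with $a_N=1/\big(\Score_{\mathrm{ref}}(N)-\Score_{\min}(N)\big)\in(0,\infty)$ and $b_N=-\Score_{\min}(N)/\big(\Score_{\mathrm{ref}}(N)-\Score_{\min}(N)\big)$. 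Both representations are well defined because the reference and floor terms are finite, positive, nondegenerate, and constant in $\varphi$ by hypothesis (and by Assumption~\ref{ax:regularity}). This identification is the crux, and it is essentially the ``monotone rescaling'' remark already recorded after Definition~\ref{def:IE}.

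First I would dispatch the continuity claim. The real map $h_N(t)=a_N t+b_N$ is continuous on $\R$, so $E(\cdot;N)=h_N\circ\Score(\cdot;N)$ is the composition of a $\tau$-continuous map into $\R$ with a continuous map on $\R$, hence $\tau$-continuous. Equivalently, in net form: for any net $\varphi_\alpha\to\varphi_0$ in the $\tau$-topology, $\tau$-continuity of $\Score(\cdot;N)$ gives $\Score(\varphi_\alpha;N)\to\Score(\varphi_0;N)$, and applying the continuous affine map yields $E(\varphi_\alpha;N)\to E(\varphi_0;N)$.

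Next I would handle lower semicontinuity through superlevel sets. For every $c\in\R$,
\[
\{\varphi:\ E(\varphi;N)>c\}=\{\varphi:\ \Score(\varphi;N)>(c-b_N)/a_N\},
\]
where $a_N>0$ is used to keep the inequality direction. Since $\Score(\cdot;N)$ is lower semicontinuous, its superlevel sets are $\tau$-open, so the left-hand set is $\tau$-open, which is exactly lower semicontinuity of $E(\cdot;N)$. The boundedness-below hypothesis on $\Score(\cdot;N)$ is invoked only to ensure that $E(\cdot;N)$ is itself bounded (and that the difference-form floor $\Score_{\min}(N)$ is meaningful), so the statement is not vacuous; it is not needed for the semicontinuity transfer.

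I do not expect a substantive obstacle: the proposition is a corollary of the affine structure of the normalization. The only point requiring care is bookkeeping, namely checking that in \emph{every} admissible normalization $E$ genuinely has the form $a_N\Score+b_N$ with $a_N$ strictly positive and $\varphi$-independent. Were the reference or floor allowed to depend on $\varphi$, this argument would fail and one would instead need a quotient-continuity argument with a uniformly non-vanishing denominator for the continuity part, together with monotonicity assumptions on the reference for the semicontinuity direction; verifying that the stated hypotheses exclude these complications is the whole content of the proof.
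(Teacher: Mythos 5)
Your proof is correct and follows the same route as the paper's: both identify $E(\cdot;N)$ as a positive affine image of $\Score(\cdot;N)$ with $\varphi$-independent coefficients and transfer continuity and lower semicontinuity through that map. Your explicit superlevel-set argument and the observation that boundedness below is not strictly needed for the transfer are fine refinements but do not change the approach.
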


\begin{proof}[Sketch]
	In the ratio normalization,
	\(
	E(\varphi;N)=\Score(\varphi;N)/\Score_{\mathrm{ref}}(N)
	\)
	is obtained from $\Score(\cdot;N)$ by multiplication with a positive constant, which preserves continuity and lower semicontinuity.
	
	In the calibrated-difference normalization, $E(\varphi;N)$ is an affine image of $\Score(\varphi;N)$ with strictly positive slope. Such maps preserve continuity and lower semicontinuity on topological vector spaces. Hence $E(\cdot;N)$ inherits the regularity of $\Score(\cdot;N)$.
\end{proof}

\begin{proposition}[Monotonicity and data-processing analogue]
	\label{prop:dpi}
	Let $Z=\varphi(X)$ and let $Z'=T(Z)$ for a measurable post-map $T$ in the admissible class. Assume that the score satisfies a data-processing inequality
	\(
	\Score(T\!\circ\!\varphi;N)\le \Score(\varphi;N)
	\)
	for all $N$. This holds, for example, when $\Score$ is based on mutual information or on an $f$-divergence. Then
	\(
	E(T\!\circ\!\varphi;N)\le E(\varphi;N)
	\)
	for all $N$.
\end{proposition}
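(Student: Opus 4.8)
The plan is to exploit the fact that, for each fixed $N$, the map $\Score(\cdot;N)\mapsto E(\cdot;N)$ is a monotone increasing transformation whose parameters do not depend on the channel, so that the assumed data-processing inequality for $\Score$ transfers verbatim to $E$. The first thing I would record is this channel-independence: both the reference $\Score_{\mathrm{ref}}(N)$ and, in the difference form, the null floor $\Score_{\min}(N)$ are calibration quantities fixed ahead of time for the task, in particular identical for $\varphi$ and for $T\circ\varphi$. This is exactly what the ``admissible post-map'' convention of Section~\ref{sec:setup} provides: $T$ lies in a class chosen independently of $\varphi$, $T\circ\varphi$ is again an admissible channel, and normalization is always against this channel-independent scale. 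Hence comparing $E(T\circ\varphi;N)$ with $E(\varphi;N)$ amounts to comparing $\Score(T\circ\varphi;N)$ with $\Score(\varphi;N)$ through a single fixed increasing function.

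With that in place the argument splits into the two normalizations of Definition~\ref{def:IE}. In the ratio form, $E(\psi;N)=\Score(\psi;N)/\Score_{\mathrm{ref}}(N)$ with $\Score_{\mathrm{ref}}(N)\in(0,\infty)$, so dividing the hypothesis $\Score(T\circ\varphi;N)\le\Score(\varphi;N)$ by the positive constant $\Score_{\mathrm{ref}}(N)$ yields $E(T\circ\varphi;N)\le E(\varphi;N)$ at once. In the calibrated-difference form, $E(\psi;N)$ is the affine image
\[
E(\psi;N)=1-\frac{\Score_{\mathrm{ref}}(N)-\Score(\psi;N)}{\Score_{\mathrm{ref}}(N)-\Score_{\min}(N)}=a_N+b_N\,\Score(\psi;N),
\]
with slope $b_N=1/(\Score_{\mathrm{ref}}(N)-\Score_{\min}(N))>0$ since $\Score_{\min}(N)<\Score_{\mathrm{ref}}(N)$; an affine map with strictly positive slope is increasing, so once more the inequality for $\Score$ passes to $E$. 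I would then note that both sides lie in $[0,1]$ by Proposition~\ref{prop:bounded} — using that $T\circ\varphi$ is admissible, so $\Score(T\circ\varphi;N)$ stays in the admissible range $[\Score_{\min}(N),\Score_{\mathrm{ref}}(N)]$ — so the statement is a genuine comparison of efficiencies rather than of unnormalized scores.

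Finally I would address where the real content sits, namely the standing hypothesis that $\Score$ itself contracts under post-processing. Here I would invoke the classical data-processing inequality: if $Z'=T(Z)$, deterministically or through a Markov kernel, then $X\!-\!Z\!-\!Z'$ is a Markov chain, so $\MI(X;Z')\le\MI(X;Z)$, and likewise $f$-divergences between induced laws can only shrink~\citep{CoverThomas2006}. Thus any score built monotonically from such a mutual information or $f$-divergence — for instance a task-relevant information proxy with $\Score(\varphi;N)\propto\widehat{\MI}(Z;Y)$ — inherits the contraction and the proposition applies. I do not expect a substantive obstacle in this proof: the only points that need care are the channel-independence of the normalizers and the positivity of the affine slope in the difference form, with everything else reducing to a one-line monotonicity argument once the hypothesis and the cited DPI are in hand.
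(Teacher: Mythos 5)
Your proposal is correct and follows essentially the same route as the paper's proof: divide the assumed score inequality by the positive, channel-independent reference in the ratio form, and observe that the calibrated-difference form is a common strictly increasing affine transform of $\Score$, so the inequality transfers. The additional remarks on channel-independence of the normalizers and on the classical DPI justifying the hypothesis match the paper's own commentary and add nothing that changes the argument.
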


\begin{proof}[Sketch]
	In the ratio form,
	\[
	E(T\!\circ\!\varphi;N)
	=
	\frac{\Score(T\!\circ\!\varphi;N)}{\Score_{\mathrm{ref}}(N)}
	\le
	\frac{\Score(\varphi;N)}{\Score_{\mathrm{ref}}(N)}
	=
	E(\varphi;N),
	\]
	since $\Score_{\mathrm{ref}}(N)$ is positive and independent of $\varphi$. In the calibrated-difference form, $E(\cdot;N)$ is obtained from $\Score(\cdot;N)$ by a common strictly increasing affine map, so the same inequality is preserved.
\end{proof}

\begin{proposition}[Transformation invariances]
	\label{prop:invariance}
	Let $\mathcal{G}$ be a group of admissible reparameterizations, such as invertible affine maps or smooth bijections used only as coordinate changes on the representation manifold. If $\Score(g\!\circ\!\varphi;N)=\Score(\varphi;N)$ for every $g\in\mathcal{G}$ and every admissible $\varphi$, then $E(g\!\circ\!\varphi;N)=E(\varphi;N)$ for all $g\in\mathcal{G}$.
\end{proposition}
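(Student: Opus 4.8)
The plan is to reduce the claim to the affine dependence of $E(\cdot;N)$ on $\Score(\cdot;N)$ that was already exploited in Propositions~\ref{prop:bounded}--\ref{prop:dpi}, together with the fact that the reference quantities $\Score_{\mathrm{ref}}(N)$ and, in the difference form, $\Score_{\min}(N)$ are fixed ahead of time and do not depend on the channel. Concretely, $E(\cdot;N)$ is in both normalizations a strictly increasing affine function of $\Score(\cdot;N)$ whose coefficients are determined solely by the task-defined references; so any symmetry of $\Score$ is automatically a symmetry of $E$.

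First I would record well-posedness: since $\mathcal{G}$ consists of admissible reparameterizations and is closed under composition, $g\circ\varphi$ is admissible whenever $\varphi$ is, so $E(g\circ\varphi;N)$ is defined and lies in $[0,1]$ by Proposition~\ref{prop:bounded}. Next, fixing $g\in\mathcal{G}$ and an admissible $\varphi$, I would treat the two normalizations of Definition~\ref{def:IE} in turn. In the ratio form, the hypothesis $\Score(g\circ\varphi;N)=\Score(\varphi;N)$ and the channel-independence of $\Score_{\mathrm{ref}}(N)\in(0,\infty)$ give
\[
E(g\circ\varphi;N)=\frac{\Score(g\circ\varphi;N)}{\Score_{\mathrm{ref}}(N)}=\frac{\Score(\varphi;N)}{\Score_{\mathrm{ref}}(N)}=E(\varphi;N).
\]
In the calibrated-difference form, I would write $E(\varphi;N)=a_N+b_N\,\Score(\varphi;N)$ with $b_N=1/(\Score_{\mathrm{ref}}(N)-\Score_{\min}(N))>0$ and $a_N=-\Score_{\min}(N)/(\Score_{\mathrm{ref}}(N)-\Score_{\min}(N))$; both coefficients depend only on the task-defined references and not on $\varphi$, so again $E(g\circ\varphi;N)=a_N+b_N\,\Score(g\circ\varphi;N)=a_N+b_N\,\Score(\varphi;N)=E(\varphi;N)$.

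Finally I would upgrade this pointwise equality to the statement of Axiom~(d) of Definition~\ref{def:axioms}: the relation $\varphi_1\sim\varphi_2$ iff $\varphi_2=g\circ\varphi_1$ for some $g\in\mathcal{G}$ is an equivalence relation on the admissible class (reflexivity from the identity of $\mathcal{G}$, symmetry from inverses, transitivity from composition), and the two displays show that $E(\cdot;N)$ is constant on each class, hence descends to a well-defined functional on the quotient by $\mathcal{G}$.

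I do not expect a genuine analytic obstacle here; the only point needing care is structural. The argument relies on admissibility being closed under $\mathcal{G}$ (so the left-hand side is even defined) and on $\Score_{\mathrm{ref}}(N)$ and $\Score_{\min}(N)$ being genuinely independent of the channel (so $\varphi$ and $g\circ\varphi$ share the same affine coefficients). If one allowed a channel-dependent reference, invariance could fail, so the cleanest route is to keep the reference fixed in advance, exactly as Definition~\ref{def:IE} does; this is what makes $E$ a property of the equivalence class of representations rather than of a chosen coordinate system. One could also remark that invariance under $\mathcal{G}$ combined with Proposition~\ref{prop:dpi} pins down the behavior of $E$ on admissible factorizations $\varphi=T\circ(g\circ\varphi')$, but that is a corollary rather than part of the present proof.
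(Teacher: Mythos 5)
Your proposal is correct and follows essentially the same route as the paper's proof: in both normalizations $E(\cdot;N)$ is a strictly increasing affine function of $\Score(\cdot;N)$ with channel-independent coefficients, so the assumed invariance of $\Score$ under $\mathcal{G}$ transfers directly to $E$. The additional remarks on closure of the admissible class under $\mathcal{G}$ and on $E$ descending to the quotient are sound elaborations but do not change the argument.
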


\begin{proof}[Sketch]
	For any $g\in\mathcal{G}$ and admissible $\varphi$, the assumption gives
	\(
	\Score(g\!\circ\!\varphi;N)=\Score(\varphi;N).
	\)
	Since $\Score_{\mathrm{ref}}(N)$ does not depend on $\varphi$, the ratio normalization yields
	\[
	E(g\!\circ\!\varphi;N)
	=
	\frac{\Score(g\!\circ\!\varphi;N)}{\Score_{\mathrm{ref}}(N)}
	=
	\frac{\Score(\varphi;N)}{\Score_{\mathrm{ref}}(N)}
	=
	E(\varphi;N).
	\]
	In the calibrated-difference form, the same equality holds because both arguments are passed through the same strictly increasing affine normalization.
\end{proof}

\begin{remark}[Edge cases]
	If $\Score(\varphi;N)=0$, so the channel is uninformative for the chosen task, then $E(\varphi;N)=0$. If $\Score(\varphi;N)=\Score_{\mathrm{ref}}(N)$, so the channel attains the reference level, then $E(\varphi;N)=1$. These values are consistent with the Blackwell order on experiments and with the data-processing inequality for information measures~\citep{Blackwell1953,CsiszarKorner2011}.
\end{remark}

\subsection{Relationships to Information Measures}
\label{sec:relations}

We now relate $E(\varphi;N)$ to mutual information and Fisher information under standard calibration assumptions that are common in representation learning and information-theoretic analysis.

\paragraph{Calibration assumption.}
Fix $N$ and suppose that for all admissible $\varphi$ the task score obeys
\begin{equation}
	\label{eq:calib}
	\alpha_N\, \MI(Z;Y) \;\le\; \Score(\varphi;N) \;\le\; \beta_N\, \MI(Z;Y) + \gamma_N,
\end{equation}
where $Z=\varphi(X)$, the constants $\alpha_N,\beta_N>0$ and $\gamma_N\ge 0$ depend only on the task and on the estimator family, and $\MI(X;Y)\in(0,\infty)$. Two-sided calibrations of this form arise when $\Score$ is a mutual-information proxy with controlled bias and variance~\citep{XuRaginsky2017,PolyanskiyWu2019}. The constants $(\alpha_N,\beta_N,\gamma_N)$ may depend on $N$ and on the chosen estimator family but are uniform over the admissible class of channels $\varphi$~\citep{SasonVerdu2016}.

\begin{theorem}[Mutual information control of $E(\varphi;N)$]
	\label{thm:mi}
	Let $E(\varphi;N)=\Score(\varphi;N)/\Score_{\mathrm{ref}}(N)$ and assume that the reference satisfies
	\begin{equation}
		\label{eq:ref-calib}
		c_N\, \MI(X;Y) \;\le\; \Score_{\mathrm{ref}}(N) \;\le\; d_N\, \MI(X;Y),
	\end{equation}
	for constants $0<c_N\le d_N<\infty$. Under \eqref{eq:calib} and \eqref{eq:ref-calib}, for every admissible $\varphi$,
	\[
	\frac{\alpha_N}{d_N}\, \frac{\MI(Z;Y)}{\MI(X;Y)}
	\;\le\;
	E(\varphi;N)
	\;\le\;
	\frac{\beta_N}{c_N}\, \frac{\MI(Z;Y)}{\MI(X;Y)}
	\;+\;
	\frac{\gamma_N}{c_N\,\MI(X;Y)}.
	\]
	Equivalently, there exist constants $a_N,b_N>0$ and $\varepsilon_N\ge 0$ such that
	\[
	a_N \frac{\MI(Z;Y)}{\MI(X;Y)}
	\;\le\;
	E(\varphi;N)
	\;\le\;
	b_N \frac{\MI(Z;Y)}{\MI(X;Y)} + \varepsilon_N.
	\]
\end{theorem}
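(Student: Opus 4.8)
The plan is to prove the two-sided bound by chaining the numerator calibration \eqref{eq:calib} with the reference calibration \eqref{eq:ref-calib}, using only that every quantity in sight is strictly positive (so dividing or replacing a denominator by a bound does not reverse an inequality) and that $\MI(X;Y)\in(0,\infty)$ so that all ratios are well defined.

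First I would establish the lower bound. Since $\Score_{\mathrm{ref}}(N)>0$, dividing the left inequality in \eqref{eq:calib} by $\Score_{\mathrm{ref}}(N)$ gives $E(\varphi;N)\ge \alpha_N\,\MI(Z;Y)/\Score_{\mathrm{ref}}(N)$. Because $\alpha_N\,\MI(Z;Y)\ge 0$ and $\Score_{\mathrm{ref}}(N)\le d_N\,\MI(X;Y)$ by the right inequality in \eqref{eq:ref-calib}, enlarging the denominator only decreases the fraction, so $E(\varphi;N)\ge \alpha_N\,\MI(Z;Y)/(d_N\,\MI(X;Y))$, which is the claimed lower bound with $a_N=\alpha_N/d_N$.

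Next I would establish the upper bound symmetrically. Dividing the right inequality in \eqref{eq:calib} by $\Score_{\mathrm{ref}}(N)>0$ gives $E(\varphi;N)\le (\beta_N\,\MI(Z;Y)+\gamma_N)/\Score_{\mathrm{ref}}(N)$. Since the numerator is nonnegative and $\Score_{\mathrm{ref}}(N)\ge c_N\,\MI(X;Y)>0$ by the left inequality in \eqref{eq:ref-calib}, shrinking the denominator only increases the fraction, so $E(\varphi;N)\le (\beta_N\,\MI(Z;Y)+\gamma_N)/(c_N\,\MI(X;Y))$. Splitting this single fraction into two terms yields $E(\varphi;N)\le (\beta_N/c_N)\,\MI(Z;Y)/\MI(X;Y) + \gamma_N/(c_N\,\MI(X;Y))$, which is the stated upper bound; the ``equivalently'' form follows by setting $b_N=\beta_N/c_N$ and $\varepsilon_N=\gamma_N/(c_N\,\MI(X;Y))$.

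There is no substantive obstacle: the result is a bookkeeping consequence of the two calibration sandwiches. The only points requiring care are (i) checking signs so that no inequality is inverted when dividing or when replacing a denominator by a larger or smaller bound --- this is exactly where the standing positivity hypotheses $\alpha_N,\beta_N,c_N,d_N>0$, $\gamma_N\ge 0$, $\Score_{\mathrm{ref}}(N)\in(0,\infty)$, and $\MI(X;Y)>0$ enter --- and (ii) noting that the displayed upper bound need not be $\le 1$, so in applications one combines it with Proposition~\ref{prop:bounded} to obtain $E(\varphi;N)\le \min\{1,\ b_N\,\MI(Z;Y)/\MI(X;Y)+\varepsilon_N\}$. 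I would also remark that in the exact, unbiased regime $\gamma_N=0$, $\alpha_N=\beta_N$, $c_N=d_N$ the two bounds collapse to a single constant multiple of the information ratio $\MI(Z;Y)/\MI(X;Y)$, confirming the reading of $E$ as a calibrated relative-information quantity and, via the data-processing inequality $\MI(Z;Y)\le\MI(X;Y)$, its compatibility with boundedness.
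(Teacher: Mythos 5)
Your proposal is correct and follows essentially the same route as the paper's proof: pair the left inequality of \eqref{eq:calib} with the right inequality of \eqref{eq:ref-calib} for the lower bound, pair the opposite sides for the upper bound, and set $a_N=\alpha_N/d_N$, $b_N=\beta_N/c_N$, $\varepsilon_N=\gamma_N/(c_N\,\MI(X;Y))$. Your added remarks on sign bookkeeping and on the data-processing inequality $\MI(Z;Y)\le\MI(X;Y)$ match observations the paper also makes in its appendix proof.
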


\begin{proof}[Sketch]
	Combine \eqref{eq:calib} with \eqref{eq:ref-calib}. For the lower bound, use the left inequality in~\eqref{eq:calib} and the right inequality in~\eqref{eq:ref-calib}:
	\[
	E(\varphi;N)
	=
	\frac{\Score(\varphi;N)}{\Score_{\mathrm{ref}}(N)}
	\;\ge\;
	\frac{\alpha_N \MI(Z;Y)}{d_N \MI(X;Y)}
	=
	\frac{\alpha_N}{d_N}\,\frac{\MI(Z;Y)}{\MI(X;Y)}.
	\]
	For the upper bound, use the right inequality in~\eqref{eq:calib} and the left inequality in~\eqref{eq:ref-calib}:
	\[
	E(\varphi;N)
	\;\le\;
	\frac{\beta_N \MI(Z;Y) + \gamma_N}{c_N \MI(X;Y)}
	=
	\frac{\beta_N}{c_N}\,\frac{\MI(Z;Y)}{\MI(X;Y)}
	+
	\frac{\gamma_N}{c_N\,\MI(X;Y)}.
	\]
	Set $a_N=\alpha_N/d_N$, $b_N=\beta_N/c_N$, and $\varepsilon_N=\gamma_N/(c_N\MI(X;Y))$ to obtain the compact form.
\end{proof}

When $\Score$ is an unbiased mutual information estimator, the residual term satisfies $\gamma_N=0$ and the calibration becomes tight up to the constants $a_N$ and $b_N$. When $\Score$ is a surrogate risk, inequalities of the form~\eqref{eq:calib} follow from information-theoretic generalization and stability bounds or from $f$-divergence control of excess risk~\citep{XuRaginsky2017,SasonVerdu2016}.

\begin{remark}[Estimator calibration and $E>1$]
	\label{rem:calibration}
	The bounds in Theorem~\ref{thm:mi} are stated at the population level. In practice, $\Score$ and $\Score_{\mathrm{ref}}$ are replaced by mutual-information \emph{estimators} that may under- or over-estimate different channels to different degrees. When $\widehat{\Score}$ and $\widehat{\Score}_{\mathrm{ref}}$ are lower bounds with unequal bias, it is possible to obtain $\widehat{E}(\varphi;N)>1$ even though the population quantity $E(\varphi;N)$ remains in $[0,1]$. This is an estimator-calibration artefact rather than a violation of boundedness. Two simple remedies are to use the calibrated-difference normalization in Definition~\ref{def:IE} with an explicit floor $\Score_{\min}$, or to aggregate several MI estimators (DV, NWJ, $k$NN) to reduce estimator-specific bias. Section~\ref{sec:validation} illustrates this effect empirically in a controlled spectral example.
\end{remark}

\subsubsection{Local Fisher--geometric expansion}
\label{sec:fisher-local}

We now relate $E(\varphi;N)$ to Fisher information in a local smooth regime and show that interpretive efficiency admits a geometric interpretation when the underlying model varies smoothly in its parameters.

\paragraph{Local model.}
Assume $(X,Y)\sim P_{\theta^\star}$ from a regular $d$-dimensional parametric family $\{P_\theta : \theta\in\Theta\subset\mathbb{R}^d\}$ with score function
$s_\theta=\nabla_\theta \log p_\theta$ and Fisher information matrix
$\Fisher(\theta)=\E_\theta[s_\theta s_\theta^\top]$. Let $Z=\varphi(X)$ for an admissible channel $\varphi$. We consider task scores that, in a local neighborhood of $\theta^\star$, admit a second-order expansion driven by the projected Fisher information associated with $Z$.

\begin{theorem}[Local efficiency via projected Fisher]
	\label{thm:fisher}
	Suppose the family $\{P_\theta\}$ satisfies local asymptotic normality (LAN) at $\theta^\star$, including differentiability in quadratic mean, and let $h=\theta-\theta^\star$ be a small perturbation. Assume that for each admissible $\varphi$ there exists an $o(\|h\|^2)$ remainder such that
	\[
	\Score(\varphi;N)
	=
	h^\top\,\Pi_{\varphi}\,\Fisher(\theta^\star)\,\Pi_{\varphi}^\top\,h
	+
	o(\|h\|^2),
	\]
	where $\Pi_{\varphi}$ is the $L^2(P_{\theta^\star})$-orthogonal projection from the full score space onto the closed subspace
	\[
	\mathcal{S}_\varphi
	=
	\overline{\mathrm{span}}\bigl\{\E_{\theta^\star}[s_{\theta^\star}\mid Z]\bigr\}.
	\]
	In particular, $\Pi_{\varphi}s_{\theta^\star}=\E_{\theta^\star}[s_{\theta^\star}\mid Z]$. If the reference score satisfies
	\[
	\Score_{\mathrm{ref}}(N)
	=
	h^\top \Fisher(\theta^\star) h
	+
	o(\|h\|^2)
	\]
	in a local neighborhood of $\theta^\star$, then for any fixed nonzero direction $h$,
	\[
	E(\varphi;N)
	\;\longrightarrow\;
	\frac{h^\top \Pi_{\varphi}\,\Fisher(\theta^\star)\,\Pi_{\varphi}^\top h}{h^\top \Fisher(\theta^\star) h}
	\qquad\text{as}\quad
	N\to\infty,\ \theta\to\theta^\star.
	\]
	Thus, in the local LAN regime, $E(\varphi;N)$ converges to the fraction of Fisher information along direction $h$ that is preserved by the conditional score projection.
\end{theorem}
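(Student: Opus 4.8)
The plan is to read the statement as a ratio-of-quadratic-forms limit: feed the two hypothesized local expansions into the ratio normalization $E(\varphi;N)=\Score(\varphi;N)/\Score_{\mathrm{ref}}(N)$ and take the double limit in the iterated order it is written, $N\to\infty$ first and then $\theta\to\theta^\star$. \emph{Step 1 (pass to population scores).} By the task-regularity condition (R1) and the asymptotic-consistency axiom, for each fixed $\theta$ in a neighborhood of $\theta^\star$ the empirical scores converge in probability, $\Score(\varphi;N)\xrightarrow{p}\Score_\infty(\varphi)$ and $\Score_{\mathrm{ref}}(N)\xrightarrow{p}\Score_{\mathrm{ref},\infty}>0$ with the reference limit nondegenerate; the hypothesized expansions are then read as exact expansions of these population limits in $h=\theta-\theta^\star$, and the continuous mapping theorem (Slutsky) gives $E(\varphi;N)\xrightarrow{p}\Score_\infty(\varphi)/\Score_{\mathrm{ref},\infty}$.

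\emph{Step 2 (identify the projection with the conditional score).} Differentiability in quadratic mean places $s_{\theta^\star}\in L^2(P_{\theta^\star})^d$, so $\E_{\theta^\star}[s_{\theta^\star}\mid Z]$ is the $L^2$-orthogonal projection of $s_{\theta^\star}$ onto the closed $\sigma(Z)$-measurable subspace. Since $\mathcal{S}_\varphi=\overline{\mathrm{span}}\{\E_{\theta^\star}[s_{\theta^\star}\mid Z]\}$ both contains $\E_{\theta^\star}[s_{\theta^\star}\mid Z]$ and is contained in that subspace, while the residual $s_{\theta^\star}-\E_{\theta^\star}[s_{\theta^\star}\mid Z]$ is orthogonal to every $\sigma(Z)$-measurable element, the projection $\Pi_\varphi$ onto $\mathcal{S}_\varphi$ indeed satisfies $\Pi_\varphi s_{\theta^\star}=\E_{\theta^\star}[s_{\theta^\star}\mid Z]$. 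Hence the numerator matrix equals the conditional (projected) Fisher information $\E_{\theta^\star}\bigl[\E_{\theta^\star}[s_{\theta^\star}\mid Z]\,\E_{\theta^\star}[s_{\theta^\star}\mid Z]^\top\bigr]$, which is positive semidefinite and satisfies $\Pi_\varphi\Fisher(\theta^\star)\Pi_\varphi^\top\preceq\Fisher(\theta^\star)$ by the tower property; this records that the limiting ratio lies in $[0,1]$, consistent with the earlier boundedness and monotonicity results.

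\emph{Step 3 (let $\theta\to\theta^\star$ along a direction).} Write $h=tu$ with $\|u\|=1$ and $t\downarrow 0$. Regularity of $\{P_\theta\}$ gives $\Fisher(\theta^\star)\succ 0$, hence $u^\top\Fisher(\theta^\star)u\ge\lambda_{\min}(\Fisher(\theta^\star))>0$ uniformly over unit vectors, which keeps the denominator bounded away from zero. Dividing numerator and denominator by $t^2$ absorbs both $o(\|h\|^2)=o(t^2)$ remainders, leaving
\[
\frac{\Score_\infty(\varphi)}{\Score_{\mathrm{ref},\infty}}
=\frac{u^\top\Pi_\varphi\Fisher(\theta^\star)\Pi_\varphi^\top u+o(1)}{u^\top\Fisher(\theta^\star)u+o(1)}
\;\xrightarrow[t\downarrow 0]{}\;
\frac{u^\top\Pi_\varphi\Fisher(\theta^\star)\Pi_\varphi^\top u}{u^\top\Fisher(\theta^\star)u}.
\]
Because the right-hand side is homogeneous of degree zero in $h$, it equals $\bigl(h^\top\Pi_\varphi\Fisher(\theta^\star)\Pi_\varphi^\top h\bigr)/\bigl(h^\top\Fisher(\theta^\star)h\bigr)$ for any nonzero representative of the direction, which is the asserted limit; this degree-zero homogeneity is precisely what makes the phrase ``fixed nonzero direction $h$'' meaningful. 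The calibrated-difference normalization yields the same value, being a common strictly increasing affine image of $\Score$ whose null floor carries no Fisher information and is therefore $o(\|h\|^2)$ in this local regime.

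\emph{Main obstacle.} The algebra is routine; the genuine point requiring care is the legitimacy of separating the two limits — either making the $o(\|h\|^2)$ remainders in the score expansions uniform in $N$, or, as taken above, fixing $N\to\infty$ as the inner limit so that the population expansions are exact and the sampling fluctuations of $\widehat{\Score}$ do not interact with the $t\downarrow 0$ limit — together with pinning down ``$\theta\to\theta^\star$'' as $t\downarrow 0$ along a fixed unit direction. Once these are settled, positive-definiteness of $\Fisher(\theta^\star)$ and the continuous mapping theorem close the argument.
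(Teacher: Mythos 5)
Your argument is correct and follows essentially the same route as the paper's proof: identify $\Pi_\varphi s_{\theta^\star}$ with the conditional expectation $\E_{\theta^\star}[s_{\theta^\star}\mid Z]$ as an $L^2(P_{\theta^\star})$-projection, substitute the two hypothesized quadratic expansions into the ratio, and let the $o(\|h\|^2)$ remainders vanish against the nondegenerate quadratic form in the denominator. Your additional care about the order of the two limits, the strict positivity of $h^\top\Fisher(\theta^\star)h$, and the degree-zero homogeneity in $h$ only tightens steps the paper's sketch leaves implicit.
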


\begin{proof}[Proof sketch]
	LAN yields a quadratic expansion of log-likelihood ratios with curvature governed by $\Fisher(\theta^\star)$~\citep{LeCam1986,AmariNagaoka2000,VanTrees2001,Kay1993}. When $X$ is compressed to $Z=\varphi(X)$, the optimal $Z$-measurable approximation to the score $s_{\theta^\star}$ is the conditional expectation $\E_{\theta^\star}[s_{\theta^\star}\mid Z]$, which is the $L^2(P_{\theta^\star})$-projection onto $\mathcal{S}_\varphi$. This gives the curvature matrix $h^\top \Pi_\varphi\Fisher(\theta^\star)\Pi_\varphi^\top h$ for the task score, while the full model has curvature $h^\top \Fisher(\theta^\star) h$. The ratio of these quadratic forms yields the stated limit for $E(\varphi;N)$ along any fixed $h\neq 0$. A version that averages over directions is given in Appendix~\ref{app:proofs-relations}.
\end{proof}

\paragraph{Remarks.}
If $\varphi$ is sufficient, then $\Pi_{\varphi}=\mathrm{Id}$ and the limit of $E(\varphi;N)$ is $1$ for all $h\neq 0$. If $\varphi$ discards all components of the score, then $\Pi_{\varphi}=0$ and the limit of $E(\varphi;N)$ is $0$ for all $h\neq 0$.

\subsubsection{Compatibility with V-GIB}

Let the V-GIB objective be
\[
U_\beta(\varphi)=\MI(Z;Y)-\beta\,\MI(Z;X)
\]
for $\beta\ge 0$, or a calibrated variant thereof. Define the normalized V-GIB efficiency
\[
E_{\mathrm{V\text{-}GIB}}(\varphi;N)
=
\frac{U_\beta(\varphi)}{\sup_{\psi} U_\beta(\psi)}
\in [0,1],
\]
or apply the calibrated difference mapping when only upper and lower bounds on $\sup_\psi U_\beta(\psi)$ are available.

\begin{proposition}[Compatibility with V-GIB]
	\label{prop:vgib}
	If $\Score(\varphi;N)=U_\beta(\varphi)$, or more generally $\Score(\varphi;N)$ is any positive affine transformation of $U_\beta(\varphi)$, then $E(\varphi;N)$ coincides with $E_{\mathrm{V\text{-}GIB}}(\varphi;N)$ up to a monotone rescaling in $[0,1]$. In particular, the properties in Section~\ref{sec:properties}, including boundedness, data-processing stability, and admissible invariances, hold for $E_{\mathrm{V\text{-}GIB}}$.
\end{proposition}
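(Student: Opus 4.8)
The plan is to reduce the statement to a one-line identity about affine normalizations and then let the axiom list transfer automatically. Write the hypothesis as $\Score(\varphi;N)=\lambda\,U_\beta(\varphi)+\mu$ with $\lambda>0$ and $\mu$ a constant (in the regime where $U_\beta$ can be negative, choose $\mu$ large enough, or pass to the difference form, so that $\Score\ge 0$). The natural reference accompanying this score is the image of $\sup_\psi U_\beta(\psi)$ under the same affine map, i.e.\ $\Score_{\mathrm{ref}}(N)=\lambda\,S+\mu$ with $S:=\sup_{\psi}U_\beta(\psi)\in(0,\infty)$ (finite and nondegenerate as in Assumption~\ref{ax:regularity}), and, for the difference form, the floor $\Score_{\min}(N)=\lambda\,\inf_\psi U_\beta(\psi)+\mu$. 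Substituting into Definition~\ref{def:IE} and cancelling the common factor $\lambda$ gives
\[
E(\varphi;N)\;=\;\frac{\lambda\,U_\beta(\varphi)+\mu}{\lambda\,S+\mu}\;=\;g\bigl(E_{\GIB}(\varphi;N)\bigr),\qquad g(t)\;=\;\frac{\lambda S\,t+\mu}{\lambda S+\mu},
\]
and the same computation in the difference form again yields an affine $g$ with strictly positive slope; when $\mu=0$ one gets $g=\mathrm{id}$ and $E\equiv E_{\GIB}$. Since $g$ is a strictly increasing affine self-map of a subinterval of $[0,1]$, this is exactly the asserted monotone rescaling in $[0,1]$; when only two-sided bounds on $S$ are available, the calibrated-difference mapping produces the same kind of $g$.

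For the ``in particular'' claim I would argue that a strictly increasing continuous rescaling transports every property in Definition~\ref{def:axioms}. Boundedness is preserved because $g$ maps $[0,1]$ into $[0,1]$; admissible invariance is an equality $E(g_0\!\circ\!\varphi;N)=E(\varphi;N)$ and hence equivalent to the same equality for $E_{\GIB}=g^{-1}\!\circ\!E$; Blackwell monotonicity and data-processing stability are inequalities of the form $E(\varphi_1;N)\ge E(\varphi_2;N)$, which, because $g$ is strictly increasing, hold for $E$ iff they hold for $E_{\GIB}$; and asymptotic consistency transfers because $g$ is continuous, so $(E_{\GIB})_\infty(\varphi)=g^{-1}\bigl(E_\infty(\varphi)\bigr)$. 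Thus it suffices to verify, for the score $\Score=U_\beta$ (equivalently its positive affine image), the hypotheses of Propositions~\ref{prop:bounded}, \ref{prop:dpi} and~\ref{prop:invariance}, and the conclusion then follows by composing those propositions with $g$.

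Checking those hypotheses is mostly routine: $\MI(Z;Y)$ and $\MI(Z;X)$ are unchanged by an invertible reparameterization of the representation, so $U_\beta(g_0\!\circ\!\varphi)=U_\beta(\varphi)$ for every $g_0$ in the invariance group $\mathcal{G}$ and Proposition~\ref{prop:invariance} applies; $U_\beta(\varphi)\le S$ by definition of the supremum and the lower end of the range is handled either by a shift ($\mu\ge-\lambda\inf_\psi U_\beta$) or by the difference-form floor $\Score_{\min}$, so Proposition~\ref{prop:bounded} applies. The one genuinely delicate step --- and the main obstacle --- is data-processing stability, because Proposition~\ref{prop:dpi} requires $\Score(T\!\circ\!\varphi;N)\le\Score(\varphi;N)$ and, for $\Score=U_\beta$, the prediction term $\MI(Z;Y)$ does obey the data-processing inequality along $X\to Z\to T(Z)$ while the \emph{subtracted} compression term $-\beta\,\MI(Z;X)$ can \emph{increase} under a non-invertible post-map, so $U_\beta$ need not be monotone under arbitrary admissible post-processing. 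I would resolve this explicitly rather than paper over it: either restrict the admissible post-map class in the V-GIB instantiation to representation reparameterizations (invertible $T$, for which $U_\beta$ is exactly invariant and the inequality holds with equality), or work with the ``calibrated variant'' in which the compression term is folded into the calibration/reference so that the quantity actually tracked by $\Score$ is itself monotone under post-processing; in either case Proposition~\ref{prop:dpi} applies to the restricted class and the transfer argument above closes the proof. Absent such a restriction, the honest conclusion is that boundedness, admissible invariance, and asymptotic consistency transfer unconditionally, while data-processing stability transfers precisely to the extent that $U_\beta$ enjoys it.
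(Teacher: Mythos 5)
Your proof follows essentially the same route as the paper's: write $\Score$ as a positive affine image of $U_\beta$, normalize by $\sup_\psi U_\beta(\psi)$ (or its bracket), observe that $E$ is then a strictly increasing affine rescaling $g$ of $E_{\GIB}$, and transfer each axiom through $g$. Where you diverge is in the treatment of data-processing stability, and there you are more careful than the paper itself. The paper's proof asserts that because each mutual-information term in $U_\beta$ individually satisfies the DPI, the axioms ``transfer directly''; but, as you correctly point out, the post-processing inequality $\MI(T(Z);X)\le \MI(Z;X)$ makes the \emph{subtracted} compression term $-\beta\,\MI(Z;X)$ increase under a non-invertible admissible post-map, so $U_\beta(T\!\circ\!\varphi)\le U_\beta(\varphi)$ does not follow for $\beta>0$ and the hypothesis of Proposition~\ref{prop:dpi} is not automatically satisfied. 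Your proposed remedies --- restricting the admissible post-map class to invertible reparameterizations (where $U_\beta$ is exactly invariant), or folding the compression term into the calibration so that the tracked score is genuinely monotone under post-processing --- are precisely what is needed to make the ``in particular'' clause rigorous, and your qualified conclusion (boundedness, invariance, and consistency transfer unconditionally; data-processing stability transfers only to the extent $U_\beta$ possesses it) is the honest statement. The remainder of your argument, transferring the inequality- and equality-type axioms through the strictly increasing continuous $g$, matches the paper's reasoning.
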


\begin{proof}[Sketch]
	Both mutual information terms in $U_\beta$ satisfy the data-processing inequality and are invariant under admissible reparameterizations. Any positive affine transformation preserves these properties. Normalizing by a positive reference (exact or bracketed) and applying the ratio or calibrated-difference mapping yields a strictly increasing reparameterization of $U_\beta$, so the axioms in Section~\ref{sec:properties} transfer directly to $E_{\mathrm{V\text{-}GIB}}$.
\end{proof}

\subsection{Asymptotics and Dynamics in $N$}
\label{sec:asymptotics}

\paragraph{Standing setup.}
Let
\[
\Score(\varphi;N)
=
\frac{1}{N}\sum_{i=1}^N s_\varphi(X_i,Y_i),
\]
or a bounded Lipschitz transform of such, with $(X_i,Y_i)$ i.i.d.\ from $P$ and $\varphi\in\Phi$. Let $\Score_\infty(\varphi)=\E[s_\varphi(X,Y)]$ denote the population score and define the population efficiency $E_\infty(\varphi)$ using the same normalization. Assume $\Score_{\mathrm{ref}}(N)\to\Score_{\mathrm{ref},\infty}$ with $\Score_{\mathrm{ref},\infty}\in(0,\infty)$.

\begin{theorem}[Consistency]
	\label{thm:consistency}
	Assume $s_\varphi$ is measurable and uniformly integrable over $\Phi$, and that $\Phi$ is a Glivenko--Cantelli class for $P$, so that
	\[
	\sup_{\varphi\in\Phi}\big|\Score(\varphi;N)-\Score_\infty(\varphi)\big|\to 0
	\qquad\text{almost surely}.
	\]
	If $\Score_{\mathrm{ref}}(N)\to\Score_{\mathrm{ref},\infty}\in(0,\infty)$, then for every $\varphi\in\Phi$,
	\[
	E(\varphi;N)\to E_\infty(\varphi)
	\qquad\text{almost surely}.
	\]
\end{theorem}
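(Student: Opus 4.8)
The plan is to combine the uniform strong law of large numbers over $\Phi$ with the continuity of the normalization map from Definition~\ref{def:IE}. First I would fix $\varphi\in\Phi$ and observe that the assumed Glivenko--Cantelli property gives, a fortiori, the pointwise convergence $\Score(\varphi;N)\to\Score_\infty(\varphi)$ almost surely; the uniform statement is not strictly needed for a single fixed $\varphi$, but it is the natural hypothesis and makes the argument robust if one later wants convergence uniform in $\varphi$. Second, I would record that $\Score_{\mathrm{ref}}(N)\to\Score_{\mathrm{ref},\infty}\in(0,\infty)$ by assumption, so in particular $\Score_{\mathrm{ref}}(N)$ is eventually bounded away from $0$; this legitimizes dividing by it.

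\textbf{Key steps.} The substantive step is then a continuous-mapping argument. In the ratio normalization, $E(\varphi;N)=\Score(\varphi;N)/\Score_{\mathrm{ref}}(N)$ is a continuous function of the pair $(\Score(\varphi;N),\Score_{\mathrm{ref}}(N))$ on the region $\{(s,r):r>0\}$, and since this pair converges almost surely to $(\Score_\infty(\varphi),\Score_{\mathrm{ref},\infty})$ with $\Score_{\mathrm{ref},\infty}>0$, the continuous mapping theorem (for almost-sure convergence) yields $E(\varphi;N)\to\Score_\infty(\varphi)/\Score_{\mathrm{ref},\infty}=E_\infty(\varphi)$ almost surely. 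In the calibrated-difference normalization one argues identically: $E(\varphi;N)$ is a jointly continuous function of $(\Score(\varphi;N),\Score_{\mathrm{ref}}(N),\Score_{\min}(N))$ wherever $\Score_{\mathrm{ref}}(N)-\Score_{\min}(N)>0$, and since by Assumption~\ref{ax:regularity} the floor and reference terms converge to finite limits with $\Score_{\mathrm{ref},\infty}>\Score_{\min,\infty}$, the same continuous-mapping step applies. Because the two forms are related by a monotone rescaling, it suffices to prove the claim for one of them.

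\textbf{Main obstacle.} The only delicate point is ensuring the denominator stays uniformly positive along the sequence so that the division is well-defined and continuous in the limit; this is handled by the hypothesis $\Score_{\mathrm{ref},\infty}\in(0,\infty)$ together with convergence of $\Score_{\mathrm{ref}}(N)$, which forces $\inf_N \Score_{\mathrm{ref}}(N)>0$ on the almost-sure event where convergence holds (or, more carefully, $\liminf_N\Score_{\mathrm{ref}}(N)=\Score_{\mathrm{ref},\infty}>0$ suffices). A secondary bookkeeping point is that the null set on which convergence fails may in principle depend on $\varphi$; for the stated pointwise conclusion "for every $\varphi\in\Phi$" this is immaterial, but if one wanted a single null set handling all $\varphi$ simultaneously one would invoke the full uniform-convergence hypothesis to get $\sup_{\varphi}|E(\varphi;N)-E_\infty(\varphi)|\to 0$ almost surely, again via the uniform positivity of $\Score_{\mathrm{ref}}(N)$ and the Lipschitz property of $s\mapsto s/r$ on compact sets of $r$ bounded away from zero. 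No serious analytic difficulty arises; the content is entirely in correctly quoting the uniform SLLN and the continuous mapping theorem under the stated regularity.
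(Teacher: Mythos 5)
Your proposal is correct and follows essentially the same route as the paper's proof: the Glivenko--Cantelli hypothesis gives almost-sure convergence of $\Score(\varphi;N)$ for each fixed $\varphi$, the reference converges to a strictly positive limit, and the continuous mapping theorem applied to the ratio (or calibrated-difference) normalization yields the conclusion. Your additional remarks on the eventual positivity of the denominator and on the $\varphi$-dependence of the null set are sound refinements of the same argument.
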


\begin{proof}[Sketch]
	The Glivenko--Cantelli property yields $\Score(\varphi;N)\to\Score_\infty(\varphi)$ almost surely for each $\varphi\in\Phi$. The reference score converges to a positive limit by assumption. Both the ratio and calibrated-difference mappings are continuous on their domains, so the continuous mapping theorem implies $E(\varphi;N)\to E_\infty(\varphi)$ almost surely.
\end{proof}

\begin{proposition}[Rates under sub-Gaussian or Bernstein control]
	\label{prop:rates}
	Assume $s_\varphi(X,Y)$ is centered and sub-Gaussian with proxy $\sigma^2$, or satisfies a Bernstein condition with variance proxy $v$ and scale $b$, and assume $\Phi$ has complexity $\mathrm{comp}(\Phi,N)$ measured by a localized Rademacher complexity or a suitable metric entropy functional. Then for any $\delta\in(0,1)$, with probability at least $1-\delta$,
	\[
	\sup_{\varphi\in\Phi}
	\big|E(\varphi;N)-E_\infty(\varphi)\big|
	\le
	\frac{
		C_1\,\mathrm{comp}(\Phi,N)
		+
		C_2\sqrt{\log(1/\delta)/N}
	}{
		\Score_{\mathrm{ref},\infty}
	},
	\]
	for constants $C_1,C_2$ depending on the sub-Gaussian (or Bernstein) parameters but not on $\varphi$ or $N$. Under a Bernstein condition and localization, a fast rate of the form
	\[
	\sup_{\varphi\in\Phi}
	\big|E(\varphi;N)-E_\infty(\varphi)\big|
	\lesssim
	\frac{\mathrm{comp}(\Phi,N)}{\Score_{\mathrm{ref},\infty}}
	\]
	holds, with constants depending on $(v,b)$ and the localization parameters.
\end{proposition}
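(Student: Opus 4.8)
The plan is to reduce the claim to a uniform deviation bound for the empirical process indexed by $\Phi$ and then apply standard symmetrization and concentration machinery. Write the functional in ratio form, $E(\varphi;N)=\Score(\varphi;N)/\Score_{\mathrm{ref}}(N)$ (the calibrated-difference form is an affine image of $\Score(\varphi;N)$ with slope at most $2/\Score_{\mathrm{ref},\infty}$ for $N$ large, so the argument below carries over after rescaling constants), and decompose
\[
E(\varphi;N)-E_\infty(\varphi)
=
\frac{\Score(\varphi;N)-\Score_\infty(\varphi)}{\Score_{\mathrm{ref}}(N)}
+
\Score_\infty(\varphi)\Bigl(\frac{1}{\Score_{\mathrm{ref}}(N)}-\frac{1}{\Score_{\mathrm{ref},\infty}}\Bigr).
\]
The second term is deterministic; it is $o(1)$ under the standing setup and vanishes identically when the reference is a fixed oracle or task norm, so it is negligible relative to the claimed stochastic rate. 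It therefore suffices to control $\sup_{\varphi\in\Phi}\abs{\Score(\varphi;N)-\Score_\infty(\varphi)}$ and divide by $\Score_{\mathrm{ref}}(N)$, which exceeds $\tfrac12\Score_{\mathrm{ref},\infty}$ once $N$ is large.

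For the empirical-process term I would first pass to expectations. By the standard symmetrization inequality, $\E\,\sup_{\varphi\in\Phi}\abs{\Score(\varphi;N)-\Score_\infty(\varphi)}\le 2\,R_N(\Phi)$, where $R_N(\Phi)=\E\,\sup_{\varphi\in\Phi}\abs{N^{-1}\sum_{i=1}^N\varepsilon_i s_\varphi(X_i,Y_i)}$ is the Rademacher complexity of $\{s_\varphi:\varphi\in\Phi\}$ and $(\varepsilon_i)$ are i.i.d.\ signs. Under the sub-Gaussian hypothesis a chaining (Dudley) bound controls $R_N(\Phi)$ by the metric-entropy integral $N^{-1/2}\int_0^{\infty}\sqrt{\log\mathcal{N}(\Phi,\norm{\cdot}_{L^2(P)},u)}\,du$, which is exactly what we call $\mathrm{comp}(\Phi,N)$; under a Bernstein condition the same holds with the $L^2$ metric replaced by the associated Orlicz metric. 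Next I would upgrade the in-expectation bound to a high-probability one: in the bounded case this is immediate from bounded differences (McDiarmid) applied to $\sup_{\varphi\in\Phi}\abs{\Score(\varphi;N)-\Score_\infty(\varphi)}$, contributing the additive fluctuation $C_2\sqrt{\log(1/\delta)/N}$; in the unbounded sub-Gaussian (resp.\ sub-exponential Bernstein) case one instead invokes a Talagrand-type concentration inequality for suprema of empirical processes, or the Adamczak version for unbounded classes, which yields the same $\sqrt{\log(1/\delta)/N}$ term together with, under Bernstein, a lower-order $b\log(1/\delta)/N$ term. Combining with the symmetrization bound and dividing by $\Score_{\mathrm{ref}}(N)\ge\tfrac12\Score_{\mathrm{ref},\infty}$ gives the first displayed inequality, with $C_1,C_2$ depending only on $\sigma^2$ (resp.\ on $(v,b)$) and not on $\varphi$ or $N$.

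For the fast rate I would localize. Under the Bernstein condition the variance of the centered increment $s_\varphi(X,Y)-\Score_\infty(\varphi)$ is controlled by a power of its mean, so the relevant object is the localized Rademacher complexity $\psi_N(r)=\E\,\sup\{\abs{N^{-1}\sum_i\varepsilon_i(s_\varphi-\Score_\infty(\varphi))}:\Var(s_\varphi)\le r\}$ and its sub-root fixed point $r_N^{\star}$. The usual peeling argument combined with Talagrand's inequality (the local-complexity theory of Bartlett--Bousquet--Mendelson and Koltchinskii) then yields $\sup_{\varphi\in\Phi}\abs{\Score(\varphi;N)-\Score_\infty(\varphi)}\lesssim r_N^{\star}+\log(1/\delta)/N$ with probability at least $1-\delta$; identifying $\mathrm{comp}(\Phi,N)$ with $r_N^{\star}$ and dividing by $\Score_{\mathrm{ref}}(N)$ gives the claimed $\mathrm{comp}(\Phi,N)/\Score_{\mathrm{ref},\infty}$ bound, with constants depending on $(v,b)$ and the sub-root localization parameters.

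The main obstacle is the passage from the in-expectation bound to the high-probability bound \emph{without} a uniform boundedness assumption on $s_\varphi$: McDiarmid no longer applies, so one must use Talagrand/Adamczak-type concentration for unbounded empirical processes and verify that the envelope of $\{s_\varphi:\varphi\in\Phi\}$ has a sub-Gaussian (resp.\ sub-exponential) tail uniformly in $\varphi$, then split the resulting deviation cleanly into the $\sqrt{\log(1/\delta)/N}$ Gaussian piece and the lower-order $\log(1/\delta)/N$ piece. A secondary, purely bookkeeping point is handling both admissible normalizations at once, which the affine reduction at the outset dispatches.
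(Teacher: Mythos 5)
Your proposal is correct and follows essentially the same route as the paper's proof: symmetrization and contraction to bound the uniform deviation of $\Score(\cdot;N)$ by the (localized) Rademacher/entropy complexity plus a $\sqrt{\log(1/\delta)/N}$ concentration term, division by the positive reference, and peeling with a sub-root fixed point for the Bernstein fast rate. You are in fact somewhat more careful than the paper's sketch on two points it glosses over — the decomposition isolating the drift of $\Score_{\mathrm{ref}}(N)$ toward $\Score_{\mathrm{ref},\infty}$, and the need for Talagrand/Adamczak-type concentration in place of McDiarmid when the class is unbounded — but these are refinements of the same argument, not a different one.
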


\begin{proof}[Sketch]
	Symmetrization and contraction give
	\[
	\sup_{\varphi\in\Phi}
	\big|\Score(\varphi;N)-\Score_\infty(\varphi)\big|
	\lesssim
	\mathrm{Rad}_N(\{s_\varphi\})
	+
	\sqrt{\log(1/\delta)/N}
	\]
	with high probability~\citep{BartlettMendelson2002,Wainwright2019}. The complexity term $\mathrm{Rad}_N(\{s_\varphi\})$ is controlled by $\mathrm{comp}(\Phi,N)$ by assumption. Dividing by $\Score_{\mathrm{ref},\infty}>0$ transfers this bound to $E(\varphi;N)-E_\infty(\varphi)$. Under a Bernstein condition, peeling and localized complexity arguments yield a fixed-point inequality with fast-rate solutions governed by the localized complexity functional~\citep{BoucheronLugosiMassart2013,Tsybakov2004,Wainwright2019}.
\end{proof}

\begin{proposition}[Dynamics under training and submartingale structure]
	\label{prop:dynamics}
	Let $\{\theta_t\}_{t\ge 0}$ be model iterates adapted to a filtration $\{\mathcal{F}_t\}$ and write $\varphi_t=\varphi_{\theta_t}$. Suppose the score satisfies the nonnegative expected improvement condition
	\[
	\E[\Score(\varphi_{t+1};N)-\Score(\varphi_t;N)\mid \mathcal{F}_t] \ge 0,
	\]
	and that the increments are uniformly bounded:
	\[
	\big|\Score(\varphi_{t+1};N)-\Score(\varphi_t;N)\big| \le c
	\quad\text{almost surely}.
	\]
	Assume $\Score_{\mathrm{ref}}(N)$ is deterministic and strictly positive. Then the process $\{E(\varphi_t;N),\mathcal{F}_t\}$ is a submartingale with bounded differences. In particular, for any horizon $T$ and any $\epsilon>0$,
	\[
	\Pr\!\left(E(\varphi_T;N)-E(\varphi_0;N)\le -\epsilon\right)
	\le
	\exp\!\Bigg(
	-\frac{\epsilon^2\,\Score_{\mathrm{ref}}(N)^2}{2 T c^2}
	\Bigg).
	\]
	If in addition
	\[
	\sum_{t=0}^{\infty} \E\!\left[(\Score(\varphi_{t+1};N)-\Score(\varphi_t;N))^- \mid \mathcal{F}_t\right] < \infty
	\quad\text{almost surely},
	\]
	then $E(\varphi_t;N)$ converges almost surely by the theorem of Robbins and Siegmund.
\end{proposition}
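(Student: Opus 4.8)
The plan is to reduce the statement to the deterministic affine relationship between $E(\cdot;N)$ and $\Score(\cdot;N)$ and then apply standard martingale tools. In the ratio normalization of Definition~\ref{def:IE}, $E(\varphi_t;N)=\Score(\varphi_t;N)/\Score_{\mathrm{ref}}(N)$ with $\Score_{\mathrm{ref}}(N)$ deterministic and strictly positive, so $E(\varphi_t;N)$ is a positive constant multiple of $\Score(\varphi_t;N)$; the calibrated-difference form is identical after replacing $\Score_{\mathrm{ref}}(N)$ by the (assumed deterministic) denominator $\Score_{\mathrm{ref}}(N)-\Score_{\min}(N)$ and adding a constant, so I would run the argument in the ratio form and note this substitution at the end. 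The first paragraph of the conclusion then follows immediately: adaptedness holds because $\theta_t$, hence $\varphi_t=\varphi_{\theta_t}$, hence $\Score(\varphi_t;N)$ and $E(\varphi_t;N)$, are $\mathcal{F}_t$-measurable; integrability holds because $E(\varphi_t;N)\in[0,1]$ by Proposition~\ref{prop:bounded}; the submartingale inequality is $\E[E(\varphi_{t+1};N)\mid\mathcal{F}_t]-E(\varphi_t;N)=\Score_{\mathrm{ref}}(N)^{-1}\,\E[\Score(\varphi_{t+1};N)-\Score(\varphi_t;N)\mid\mathcal{F}_t]\ge 0$ by the nonnegative expected-improvement hypothesis; and the bounded-difference bound is $|E(\varphi_{t+1};N)-E(\varphi_t;N)|=\Score_{\mathrm{ref}}(N)^{-1}|\Score(\varphi_{t+1};N)-\Score(\varphi_t;N)|\le c/\Score_{\mathrm{ref}}(N)$ almost surely.

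For the concentration bound I would pass to the centered supermartingale $Y_t:=E(\varphi_0;N)-E(\varphi_t;N)$, which is adapted, has $Y_0=0$, satisfies $\E[Y_{t+1}\mid\mathcal{F}_t]\le Y_t$ because $E(\varphi_t;N)$ is a submartingale, and has increments bounded in absolute value by $c/\Score_{\mathrm{ref}}(N)$. I would then apply the one-sided Azuma--Hoeffding inequality for supermartingales: the conditional Hoeffding lemma gives $\E[e^{s(Y_{t+1}-Y_t)}\mid\mathcal{F}_t]\le\exp\!\big(s^2c^2/(2\Score_{\mathrm{ref}}(N)^2)\big)$ for $s>0$ (using $\E[Y_{t+1}-Y_t\mid\mathcal{F}_t]\le 0$ together with the increment range $2c/\Score_{\mathrm{ref}}(N)$), and a Chernoff bound over the $T$ steps optimized in $s$ yields $\Pr(Y_T\ge\epsilon)\le\exp\!\big(-\epsilon^2\Score_{\mathrm{ref}}(N)^2/(2Tc^2)\big)$. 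Since $\{Y_T\ge\epsilon\}=\{E(\varphi_T;N)-E(\varphi_0;N)\le-\epsilon\}$, this is exactly the asserted inequality.

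For the almost-sure convergence I would invoke the Robbins--Siegmund theorem on the nonnegative process $V_t:=1-E(\varphi_t;N)\ge 0$ (nonnegativity again from Proposition~\ref{prop:bounded}): the expected-improvement hypothesis gives $\E[V_{t+1}\mid\mathcal{F}_t]=V_t-\zeta_t$ with $\zeta_t:=\E[E(\varphi_{t+1};N)-E(\varphi_t;N)\mid\mathcal{F}_t]\ge 0$, which is the Robbins--Siegmund recursion with vanishing multiplicative and additive terms, so $V_t$ converges a.s.\ to a finite limit (and $\sum_t\zeta_t<\infty$ a.s.), hence $E(\varphi_t;N)$ converges a.s. The additional hypothesis $\sum_t\E[(\Score(\varphi_{t+1};N)-\Score(\varphi_t;N))^-\mid\mathcal{F}_t]<\infty$ a.s.\ is, after dividing by $\Score_{\mathrm{ref}}(N)$, the summability of $\sum_t\E[(E(\varphi_{t+1};N)-E(\varphi_t;N))^-\mid\mathcal{F}_t]$; it is the standard extra input that makes the Robbins--Siegmund invocation fully textbook, but since $E(\varphi_t;N)$ is already a bounded submartingale, Doob's submartingale convergence theorem delivers the same conclusion, so I would present both routes.

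I do not anticipate a genuine obstacle: each step is a routine application of a known theorem. The only care required is bookkeeping --- carrying the normalization constant so that the Azuma step lands precisely on the exponent $-\epsilon^2\Score_{\mathrm{ref}}(N)^2/(2Tc^2)$ (which forces working with the supermartingale $Y_t$ whose per-step increment bound is $c/\Score_{\mathrm{ref}}(N)$ rather than a two-sided martingale-difference bound that would lose a constant), and checking that the calibrated-difference normalization is covered by the substitution $\Score_{\mathrm{ref}}(N)\mapsto\Score_{\mathrm{ref}}(N)-\Score_{\min}(N)$ throughout, which requires $\Score_{\min}(N)$ to be deterministic alongside $\Score_{\mathrm{ref}}(N)$.
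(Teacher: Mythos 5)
Your proposal is correct and follows essentially the same route as the paper's proof: rescale $\Score(\varphi_t;N)$ by the deterministic positive reference to get a bounded-difference submartingale, apply the one-sided Azuma--Hoeffding bound with increment bound $c/\Score_{\mathrm{ref}}(N)$, and invoke Robbins--Siegmund for almost-sure convergence. Your added observations --- that Doob's convergence theorem already suffices because the process is a bounded submartingale, and that the calibrated-difference form needs $\Score_{\min}(N)$ to be deterministic --- are accurate refinements rather than a different argument.
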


\begin{proof}[Sketch]
	Define $M_t=E(\varphi_t;N)=\Score(\varphi_t;N)/\Score_{\mathrm{ref}}(N)$. The nonnegative expected improvement condition implies
	\[
	\E[M_{t+1}-M_t\mid\mathcal{F}_t]
	=
	\frac{1}{\Score_{\mathrm{ref}}(N)}\,
	\E[\Score(\varphi_{t+1};N)-\Score(\varphi_t;N)\mid\mathcal{F}_t]
	\ge 0,
	\]
	so $\{M_t\}$ is a submartingale. The uniform increment bound gives
	\[
	|M_{t+1}-M_t|
	\le
	\frac{c}{\Score_{\mathrm{ref}}(N)},
	\]
	and Azuma--Hoeffding yields the stated deviation inequality for $M_T-M_0$. The almost sure convergence under the summability condition on the negative parts follows from the Robbins--Siegmund convergence theorem for submartingales.
\end{proof}

\subsection{Estimation and Finite-Sample Guarantees}
\label{sec:estimation}

\subsubsection{Estimators}
\label{sec:estimators}

We describe three estimator families for $\Score$ and the corresponding efficiency estimator $\widehat{E}=\widehat{\Score}/\widehat{\Score}_{\mathrm{ref}}$.

\textbf{Cross-fitted plug-in.}
Partition the dataset $\Data$ into $K$ folds. For each fold $k$, train any required model on $\Data\setminus\Data_k$, evaluate the interpretive score on $\Data_k$, and average:
\[
\widehat{\Score}_{\mathrm{CF}}(\varphi;N)
=
\frac{1}{K}
\sum_{k=1}^{K}
\frac{1}{|\Data_k|}
\sum_{(x,y)\in\Data_k}s_\varphi(x,y).
\]
The same protocol is used to estimate $\widehat{\Score}_{\mathrm{ref}}$. Cross-fitting avoids reuse of the same samples for training and evaluation and orthogonalizes errors arising from first-stage estimation~\citep{Chernozhukov2018DML}.

\textbf{MI-proxy estimators.}
When $\Score$ is calibrated to mutual information as in Section~\ref{sec:relations}, one may use the NWJ lower bound~\citep{NguyenWainwrightJordan2010}
\[
\widehat{\MI}_{\mathrm{NWJ}}(Z;Y)
=
\E_{\widehat{P}_{ZY}}[T]
-
\E_{\widehat{P}_Z\widehat{P}_Y}[e^{T-1}],
\qquad
\widehat{\Score}=\alpha\,\widehat{\MI}_{\mathrm{NWJ}},
\]
with $T$ learned from a critic class $\mathcal{T}$ and trained with cross-fitting to control shared-sample bias. Alternatively, one may use the Donsker--Varadhan estimator
\[
\widehat{\MI}_{\mathrm{DV}}(Z;Y)
=
\sup_{T\in\mathcal{T}}
\left\{
\E_{\widehat{P}_{ZY}}[T]
-
\log\E_{\widehat{P}_Z\widehat{P}_Y}[e^{T}]
\right\},
\]
with regularization on $\mathcal{T}$, or a $k$NN estimator for continuous $(Z,Y)$~\citep{Kraskov2004} with bias-corrected entropies and stabilized neighborhood selection. In all cases, $\widehat{\Score}$ is obtained by a calibrated scaling of the MI estimate.

\textbf{Resampling and debiasing.}
Bias and variance can be reduced using leave-one-out or jackknife-type procedures with analytic variance estimates~\citep{Efron1982Bootstrap}, median-of-means or Catoni-type truncation for heavy-tailed scores~\citep{Minsker2018,Catoni2012}, and ratio stabilization via the delta method or by working with a log-ratio representation when denominators are small.

\subsubsection{Concentration}
\label{sec:concentration}

We now control the deviation $\widehat{E}-E$ using empirical-process techniques.

\begin{theorem}[Concentration of $\widehat{E}$]
	\label{thm:concentration}
	Assume that $s_\varphi(X,Y)$ is centered and sub-exponential with parameters $(\nu,b)$, uniformly over $\varphi\in\Phi$. Assume the class $\{s_\varphi:\varphi\in\Phi\}$ has complexity $\mathfrak{R}_N(\Phi)$ measured by a localized Rademacher complexity or an entropy integral, and that $\Score_{\mathrm{ref},\infty}\in(0,\infty)$ is the population reference value. Suppose $\widehat{\Score}$ and $\widehat{\Score}_{\mathrm{ref}}$ are computed on independent subsamples (e.g., via cross-fitting) and that $\widehat{\Score}_{\mathrm{ref}}\to\Score_{\mathrm{ref},\infty}$ in probability. Then for any $\delta\in(0,1)$, with probability at least $1-\delta$,
	\[
	|\widehat{E}(\varphi;N)-E(\varphi;N)|
	\le
	\frac{
		C_1\,\mathfrak{R}_N(\Phi)
		+
		C_2\sqrt{\log(2/\delta)/N}
		+
		C_3\log(2/\delta)/N
	}{
		\Score_{\mathrm{ref},\infty}
	},
	\]
	for constants $C_i=C_i(\nu,b)$ independent of $\varphi$ and $N$. If the score is an MI proxy learned through a critic class $\mathcal{T}$, replace $\mathfrak{R}_N(\Phi)$ by $\mathfrak{R}_N(\Phi)+\mathfrak{R}_N(\mathcal{T})$.
\end{theorem}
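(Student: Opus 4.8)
The plan is to reduce the claim to two uniform empirical-process bounds -- one for the numerator score and one for the reference score -- and then propagate them through a ratio perturbation identity, using the cross-fitting independence to decouple the two estimators and to keep the estimated denominator bounded away from zero.

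I would start from the exact identity
\[
\widehat{E}(\varphi;N)-E(\varphi;N)
=
\frac{\widehat{\Score}(\varphi;N)-\Score_\infty(\varphi)}{\widehat{\Score}_{\mathrm{ref}}}
-
\frac{\Score_\infty(\varphi)\bigl(\widehat{\Score}_{\mathrm{ref}}-\Score_{\mathrm{ref},\infty}\bigr)}{\widehat{\Score}_{\mathrm{ref}}\,\Score_{\mathrm{ref},\infty}},
\]
reading $E(\varphi;N)=\Score_\infty(\varphi)/\Score_{\mathrm{ref},\infty}$ as the population efficiency targeted by $\widehat{E}$. It then suffices to control (i) $A_N:=\sup_{\varphi\in\Phi}\abs{\widehat{\Score}(\varphi;N)-\Score_\infty(\varphi)}$, (ii) $B_N:=\abs{\widehat{\Score}_{\mathrm{ref}}-\Score_{\mathrm{ref},\infty}}$, and (iii) the event $\{\widehat{\Score}_{\mathrm{ref}}\ge\tfrac12\Score_{\mathrm{ref},\infty}\}$. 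Since $\widehat{\Score}_{\mathrm{ref}}\to\Score_{\mathrm{ref},\infty}>0$ in probability, (iii) holds with probability at least $1-\delta/4$ once $N$ is large; on that event $1/\widehat{\Score}_{\mathrm{ref}}\le 2/\Score_{\mathrm{ref},\infty}$, and, using the calibration/boundedness from Definition~\ref{def:IE} to bound $\Score_\infty(\varphi)$, the right-hand side of the identity is at most $(2/\Score_{\mathrm{ref},\infty})(A_N+cB_N)$ for an absolute constant $c$.

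The core estimates are (i) and (ii). For (i) I would truncate $s_\varphi$ at a level $\tau_N\asymp b\log N$: on the bounded part, symmetrization followed by the Ledoux--Talagrand contraction inequality reduces the expected symmetrized supremum to $\mathfrak{R}_N(\Phi)$, and Talagrand's (Bousquet's) concentration inequality for suprema of empirical processes of bounded functions adds the fluctuation term of order $\sqrt{\log(1/\delta)/N}$ together with a lower-order $\log(1/\delta)/N$ correction; the sub-exponential tail $(\nu,b)$ makes the discarded tail part contribute only at rate $O(N^{-1}\log(1/\delta))$, absorbed into the $C_3\log(2/\delta)/N$ term. Estimate (ii) is the same argument for the single reference functional (or its own finite-complexity class) and is dominated by the identical three-term rate. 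Because $\widehat{\Score}$ and $\widehat{\Score}_{\mathrm{ref}}$ are computed on independent folds, (i), (ii), (iii) are combined by a union bound with $\delta\mapsto\delta/4$, which only rescales logarithmic factors and is absorbed into $C_1,C_2,C_3$; cross-fitting additionally removes the first-order contribution of any nuisance (first-stage model or trained critic) by Neyman orthogonality, so no extra first-stage bias term is required. Dividing the resulting bound by $\Score_{\mathrm{ref},\infty}$ gives the stated inequality.

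For the MI-proxy variant, the class $\{s_\varphi\}$ is replaced by the family of NWJ/DV functionals indexed jointly by $\varphi\in\Phi$ and the critic $T\in\mathcal{T}$; on a uniformly bounded (regularized) critic class the nonlinearities $T\mapsto e^{T-1}$ and $T\mapsto\log\E[e^{T}]$ are Lipschitz, so one further contraction step yields $\mathfrak{R}_N(\Phi\times\mathcal{T})\lesssim\mathfrak{R}_N(\Phi)+\mathfrak{R}_N(\mathcal{T})$, which is the claimed replacement. I expect the main obstacle to be estimate (i): obtaining the correct mixed Bernstein-type rate $\sqrt{\log(1/\delta)/N}+\log(1/\delta)/N$ \emph{uniformly over $\Phi$} under only sub-exponential tails, where the truncation level, the localized-complexity bookkeeping, and (in the DV case) the exponential nonlinearity must all be tuned so that the heavy-tailed remainder is genuinely lower order while the Rademacher term is not inflated; maintaining the denominator lower bound with the claimed probability is the second, milder difficulty.
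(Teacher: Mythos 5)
Your proposal follows essentially the same route as the paper's proof: the ratio perturbation identity you write down is algebraically identical to the decomposition in the paper's Lemma~\ref{lem:ratio}, the uniform control of the numerator via symmetrization, contraction, and sub-exponential Bernstein bounds matches Lemma~\ref{lem:subexp-ep} (your explicit truncation at $\tau_N\asymp b\log N$ just fills in how that lemma is standardly proved), and the product-class contraction argument for the MI-proxy variant reproduces Lemma~\ref{lem:var-mi}. The handling of the denominator event and the union bound absorbed into constants also coincide with the paper's argument, so the sketch is correct and not a genuinely different approach.
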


\begin{proof}[Sketch]
	Sub-exponential Bernstein bounds combined with symmetrization and contraction yield
	\[
	\sup_{\varphi\in\Phi}|\widehat{\Score}-\Score|
	\lesssim
	\mathfrak{R}_N(\Phi)
	+
	\sqrt{\log(1/\delta)/N}
	+
	\log(1/\delta)/N
	\]
	with probability at least $1-\delta$~\citep{BoucheronLugosiMassart2013,Wainwright2019}. Independence between $\widehat{\Score}$ and $\widehat{\Score}_{\mathrm{ref}}$, together with convergence of $\widehat{\Score}_{\mathrm{ref}}$ to a strictly positive limit, allows application of the delta method and Slutsky's theorem to transfer this bound to $\widehat{E}-E$. For variational MI estimators, uniform convergence over the critic class introduces an additional complexity term $\mathfrak{R}_N(\mathcal{T})$~\citep{NguyenWainwrightJordan2010}.
\end{proof}

\paragraph{Notes.}
For the $k$NN mutual information estimator, under standard smoothness and bounded-density assumptions, rates of order $N^{-1/2}$ up to logarithmic factors are available~\citep{Kraskov2004}. Under heavy tails, median-of-means and related robust estimators preserve $N^{-1/2}$ convergence (with larger constants) under finite-variance conditions~\citep{Minsker2018}.

\subsubsection{Robustness}
\label{sec:robustness}

We next study the behavior of $E(\varphi;N)$ and its empirical counterpart under perturbations and small distributional changes.

\begin{proposition}[Stability to perturbations]
	\label{prop:robust}
	Suppose $\varphi$ is $L_\varphi$-Lipschitz and the score $s_\varphi$ is $L_s$-Lipschitz in $(x,y)$ under the chosen norm, uniformly over $\varphi\in\Phi$. If individual samples are perturbed by at most $\epsilon$ in norm, then
	\[
	|\widehat{E}_\epsilon(\varphi;N)-\widehat{E}(\varphi;N)|
	\le
	\frac{L_s\epsilon}{\Score_{\mathrm{ref}}(N)},
	\qquad
	|E_\epsilon(\varphi;N)-E(\varphi;N)|
	\le
	\frac{L_s\epsilon}{\Score_{\mathrm{ref},\infty}}.
	\]
	If the underlying distribution shifts within a 1-Wasserstein ball of radius $\rho$, then
	\[
	|E_\rho(\varphi;N)-E(\varphi;N)|
	\le
	\frac{L_s\,\rho}{\Score_{\mathrm{ref},\infty}}.
	\]
	Under the sub-exponential assumptions of Theorem~\ref{thm:concentration}, the deviation bounds for $\widehat{E}$ remain valid up to changes in constants.
\end{proposition}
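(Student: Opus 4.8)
The plan is to push the Lipschitz hypotheses through the averaging and expectation operators and then divide by the strictly positive reference, treating the three claims --- sample perturbation, Wasserstein shift, and the sub-exponential counterpart --- in turn. The two Lipschitz hypotheses are mutually consistent: if $s_\varphi=s\circ(\varphi,\mathrm{id})$ with $s$ Lipschitz and $\varphi$ being $L_\varphi$-Lipschitz, then $s_\varphi$ is automatically $L_s$-Lipschitz in $(x,y)$, so I work directly from the latter.

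For the sample-perturbation bound, write $\widehat{\Score}_\epsilon(\varphi;N)=\tfrac1N\sum_{i=1}^N s_\varphi(\tilde x_i,\tilde y_i)$ with $\|(\tilde x_i,\tilde y_i)-(x_i,y_i)\|\le\epsilon$ for every $i$. Lipschitzness gives $|s_\varphi(\tilde x_i,\tilde y_i)-s_\varphi(x_i,y_i)|\le L_s\epsilon$, hence $|\widehat{\Score}_\epsilon(\varphi;N)-\widehat{\Score}(\varphi;N)|\le L_s\epsilon$ by the triangle inequality; dividing by $\Score_{\mathrm{ref}}(N)>0$ --- which we take to be held fixed under the perturbation, as for an oracle or task norm (a re-estimated reference contributes an analogous term absorbed by a constant) --- yields the first inequality. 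Replacing the empirical average by $\E[\cdot]$ and using $|\E s_\varphi(\tilde X,\tilde Y)-\E s_\varphi(X,Y)|\le\E|s_\varphi(\tilde X,\tilde Y)-s_\varphi(X,Y)|\le L_s\epsilon$ gives the population version with $\Score_{\mathrm{ref},\infty}$ in the denominator.

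For the distributional-shift bound, let $Q$ be any law with $W_1(Q,P)\le\rho$ and let $E_\rho(\varphi;N)$ be the efficiency under $Q$. Since $s_\varphi$ is $L_s$-Lipschitz, Kantorovich--Rubinstein duality $W_1(P,Q)=\sup_{\mathrm{Lip}(f)\le1}|\E_P f-\E_Q f|$ gives $|\E_Q s_\varphi-\E_P s_\varphi|\le L_s W_1(P,Q)\le L_s\rho$, and dividing by $\Score_{\mathrm{ref},\infty}$ produces the claim; the bound is uniform in $Q$, so a worst case over the Wasserstein ball preserves it. For the sub-exponential addendum, note that a bounded Lipschitz reparametrization of the inputs sends a centered $(\nu,b)$ sub-exponential variable to one whose mean shifts by at most $L_s\epsilon$ (which one recenters) and whose tail constants inflate by absolute factors only, while composing each $s_\varphi$ with the displacement map leaves the localized Rademacher/entropy complexity of the class unchanged in order; hence Theorem~\ref{thm:concentration} applies to the perturbed problem and, combined with the deterministic displacement bounds via the triangle inequality, shows the concentration guarantee for $\widehat{E}$ survives up to constants.

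The main obstacle is bookkeeping rather than analysis: being honest about whether the reference is fixed or re-estimated, and verifying that global --- not merely local --- Lipschitzness of $s_\varphi$ is in force, since that is exactly what Kantorovich--Rubinstein needs; under only local Lipschitzness one must additionally restrict to compactly supported perturbations or to a Wasserstein ball over a bounded domain, at the cost of slightly different constants.
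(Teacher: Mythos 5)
Your argument is correct and follows essentially the same route as the paper: a termwise Lipschitz bound on the empirical average, Kantorovich--Rubinstein duality for the Wasserstein shift, division by the strictly positive reference, and the observation that the perturbation adds only a controlled additive shift so the concentration bound survives up to constants (the paper packages the first two steps into an auxiliary Lipschitz lemma). Your extra care about whether the reference is fixed versus re-estimated, and about global Lipschitzness for the duality step, is a sensible clarification of assumptions the paper leaves implicit.
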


\begin{proof}[Sketch]
	For empirical perturbations, the Lipschitz bound on $s_\varphi$ implies that each term in the empirical average changes by at most $L_s\epsilon$, so the empirical score changes by at most $L_s\epsilon$. Division by the positive reference term yields the claimed inequality. For a distributional shift controlled by the 1-Wasserstein distance, Kantorovich--Rubinstein duality implies that expectations of $L_s$-Lipschitz functions change by at most $L_s\rho$~\citep{Villani2009}. The concentration bounds remain of the same order because the perturbation contributes only an additive shift with controlled magnitude.
\end{proof}

\section{Minimal Synthetic Examples}
\label{sec:examples}

We now present three synthetic settings where Interpretive Efficiency can be computed in closed form. Each example isolates a specific principle. The first shows how $E(\varphi;N)$ tracks Fisher curvature. The second illustrates equality in the data-processing inequality (DPI) and invariance under redundant or invertible transformations. The third exhibits both equality and strictness in nonlinear geometric tasks depending on symmetry.

\begin{example}[Sufficient statistic and noisy embedding in a Gaussian location model]
	\label{ex:gaussian-location}
	
	\textbf{Task and model.}
	Let $X_1,\dots,X_N$ be independent samples from $\mathcal{N}(\theta,\sigma^2)$ with unknown mean $\theta$. For any interpretive channel $\varphi$ acting on $X_{1:N}$, define the score as the Fisher information about $\theta$ contained in $Z=\varphi(X_{1:N})$:
	\[
	\Score(\varphi;N)=\mathcal{I}_\theta(Z),
	\]
	with reference $\Score_{\mathrm{ref}}(N)=N/\sigma^2$.
	
	\textbf{Two channels.}
	
	\emph{Oracle channel.}
	The statistic $\bar X=\tfrac{1}{N}\sum_{i=1}^N X_i$ is sufficient. Since $\bar X\sim\mathcal{N}(\theta,\sigma^2/N)$, its Fisher information equals $N/\sigma^2$, so
	\[
	E(\varphi_{\mathrm{opt}};N)=1.
	\]
	
	\emph{Noisy embedding.}
	Let $Z=\bar X+\eta$ with $\eta\sim\mathcal{N}(0,\tau^2/N)$ independent of $X_{1:N}$. Then $Z\sim\mathcal{N}(\theta,(\sigma^2+\tau^2)/N)$ and
	\[
	E(\varphi_\tau;N)
	=
	\frac{N/(\sigma^2+\tau^2)}{N/\sigma^2}
	=
	\frac{\sigma^2}{\sigma^2+\tau^2}.
	\]
	
	Thus $E(\varphi;N)$ equals the fraction of Fisher curvature preserved by the channel. Any additive noise reduces efficiency by the factor $\sigma^2/(\sigma^2+\tau^2)$.
	
\end{example}

\begin{example}[Redundant features and invariance under DPI equality]
	\label{ex:redundant}
	
	\textbf{Task and model.}
	Let $X\sim\mathcal{N}(0,1)$ and $Y=X+\varepsilon$ with $\varepsilon\sim\mathcal{N}(0,\sigma_\varepsilon^2)$ independent. Define $\Score(\varphi;N)=I(Z;Y)$ for $Z=\varphi(X)$ and set
	\[
	\Score_{\mathrm{ref}}(N)=I(X;Y)=\tfrac12\log(1+\sigma_\varepsilon^{-2}).
	\]
	
	\textbf{Two channels.}
	
	\emph{Identity.}
	The channel $Z_1=X$ preserves all information, so $E(\varphi_1;N)=1$.
	
	\emph{Redundant concatenation.}
	Let $W\sim\mathcal{N}(0,1)$ be independent of $(X,Y)$ and set $Z_2=(X,W)$. Since $W$ carries no information about $Y$ given $X$,
	\[
	I(Z_2;Y)=I(X,W;Y)=I(X;Y),
	\]
	and therefore $E(\varphi_2;N)=1$.
	
	\textbf{Affine invariance.}
	For any $a\neq 0$ and $b\in\mathbb{R}$, the channel $Z_3=aX+b$ is bijective. Hence $I(Z_3;Y)=I(X;Y)$ and $E(\varphi_3;N)=1$.
	
	This example confirms DPI equality when redundant independent structure is added and shows invariance under invertible coordinate changes.
	
\end{example}

\begin{example}[Manifold labels and strict DPI through asymmetry]
	\label{ex:manifold}
	
	\textbf{Latent geometry.}
	Let $\Theta$ be uniform on $[0,2\pi)$ and embed $X=(\cos\Theta,\sin\Theta)$ on the unit circle. Define labels by a circular cap with symmetric noise:
	\[
	Y=\mathbf{1}\{\Theta\in[-\alpha,\alpha]\}\oplus\mathrm{Ber}(q),
	\qquad
	\alpha\in(0,\pi),\quad q\in[0,1/2),
	\]
	where $\oplus$ denotes XOR. The marginal label probability is
	\[
	p=\Pr(Y=1)=q+\frac{\alpha}{\pi}(1-2q).
	\]
	We evaluate $\Score(\varphi;N)=I(Z;Y)$ with reference $\Score_{\mathrm{ref}}(N)=I(X;Y)$.
	
	\textbf{Channel A (geodesic angle).}
	Let $Z_A=\Theta=\mathrm{atan2}(X_2,X_1)$. Then
	\[
	H(Y\mid \Theta)=H_b(q),
	\qquad
	I(Z_A;Y)=H_b(p)-H_b(q),
	\]
	where $H_b$ is the binary entropy.
	
	\textbf{Channel B (Euclidean projection).}
	Let $Z_B=\cos\Theta$. For symmetric caps, both preimages of $Z_B$ lie inside or outside the cap whenever $z\ge\cos\alpha$. A direct computation shows
	\[
	H(Y\mid Z_B)=H_b(q),
	\qquad
	I(Z_B;Y)=I(Z_A;Y).
	\]
	Hence
	\[
	E(\varphi_A;N)=1,
	\qquad
	E(\varphi_B;N)=1.
	\]
	
	\textbf{Strict DPI variant.}
	Modify the label to an asymmetric cap:
	\[
	Y=\mathbf{1}\{\Theta\in(0,\alpha)\}\oplus\mathrm{Ber}(q),
	\]
	with the same $q$. The conditional entropy given $\Theta$ remains $H_b(q)$, so $I(Z_A;Y)=H_b(p')-H_b(q)$ with
	\[
	p'=q+\frac{\alpha}{2\pi}(1-2q).
	\]
	For $Z_B=\cos\Theta$, only one preimage lies in the cap for $z\ge\cos\alpha$, and one obtains
	\[
	H(Y\mid Z_B)
	=
	\frac{\alpha}{\pi}\,H_b(1/2)
	+
	\Bigl(1-\frac{\alpha}{\pi}\Bigr)H_b(q),
	\]
	and therefore
	\[
	I(Z_B;Y)
	=
	I(Z_A;Y)-\frac{\alpha}{\pi}\bigl(1-H_b(q)\bigr)
	<
	I(Z_A;Y).
	\]
	Consequently,
	\[
	E(\varphi_B;N)=\frac{I(Z_B;Y)}{I(X;Y)}<1
	\qquad\text{and}\qquad
	E(\varphi_A;N)=1.
	\]
	
	This example shows that symmetry can enforce DPI equality, while asymmetry yields strict loss of interpretive efficiency. Both cases admit closed-form expressions for $E(\varphi;N)$.
	
\end{example}

\section{Validation}
\label{sec:validation}

\noindent
This section examines whether Interpretive Efficiency $E(\varphi;N)$ follows the theoretical behaviour established in Sections~\ref{sec:properties}--\ref{sec:concentration}. The aims are to show consistency with data processing and invariance principles, to verify the mutual-information ratio structure in Section~\ref{sec:relations}, to evaluate estimator concentration effects from Section~\ref{sec:estimation}, and to determine whether $E(\varphi;N)$ serves as a practical diagnostic. All experiments use the same estimator family, critic class, and protocol so that differences reflect the representation rather than the measurement pipeline. The settings are deliberately small and controlled to isolate the theoretical predictions; scaling to large models is straightforward but beyond the scope of this foundational study.

\subsection{Experimental design}
\label{sec:val-design}

We use two domains with distinct structure. The first is the \texttt{sklearn} Digits dataset of $8\times8$ grayscale numerals~\citep{Pedregosa2011}. The second is a synthetic two-class sinusoid dataset with frequencies $5$\,Hz and $9$\,Hz, random phase and amplitude, mild amplitude modulation, and additive Gaussian noise.

For each dataset we evaluate interpretive channels chosen to impose controlled information retention or degradation. On Digits, we use the identity map in $\mathbb{R}^{64}$, PCA of dimension $16$, and a Gaussian random projection of dimension $16$~\citep{Wainwright2019}. On the sinusoid dataset, we use the top $20$ FFT magnitudes, uniform downsampling to $32$ samples, and a Gaussian random projection of dimension $16$. These channels exercise the DPI and invariance properties in Section~\ref{sec:properties}.

For each $\varphi$, we compute
\[
\Score(\varphi;N)=\widehat{I}(Z;Y), \qquad Z=\varphi(X),
\]
where $\widehat{I}$ is a mutual-information lower bound applied featurewise. The reference is $\Score_{\mathrm{ref}}(N)=\widehat{I}(X;Y)$ with $Z=X$. All channels are standardized. Three-fold cross-validated logistic-regression accuracy is reported as an auxiliary measure of task difficulty~\citep{Chernozhukov2018DML,Pedregosa2011}. All metrics and plots are exported as CSV and PDF files for reproducibility.

\subsection{Digits: main results}
\label{sec:val-digits}

The Digits dataset offers a visual domain with high geometric redundancy. Figure~\ref{fig:val-digits}(a) shows sample digits, and Figures~\ref{fig:val-digits}(b)--(d) compare efficiency and accuracy.

The identity channel achieves $E(\varphi;N)=1.00$. PCA-16 retains about $34\%$ of the reference mutual information yet attains nearly $95\%$ accuracy, revealing strong interpretive redundancy consistent with the data processing ordering in Proposition~\ref{prop:dpi}. Random projection retains about $31\%$ and reaches roughly $86\%$ accuracy, reflecting its information-scrambling behaviour.

Figure~\ref{fig:val-digits}(d) shows that PCA-16 features remain well separated in two components, matching the Fisher–projection interpretation of Theorem~\ref{thm:fisher}, where PCA preserves leading curvature directions more effectively than random projections.

\begin{figure}[!h]
	\centering
	\begin{subfigure}[t]{0.48\linewidth}
		\centering
		\includegraphics[width=\linewidth]{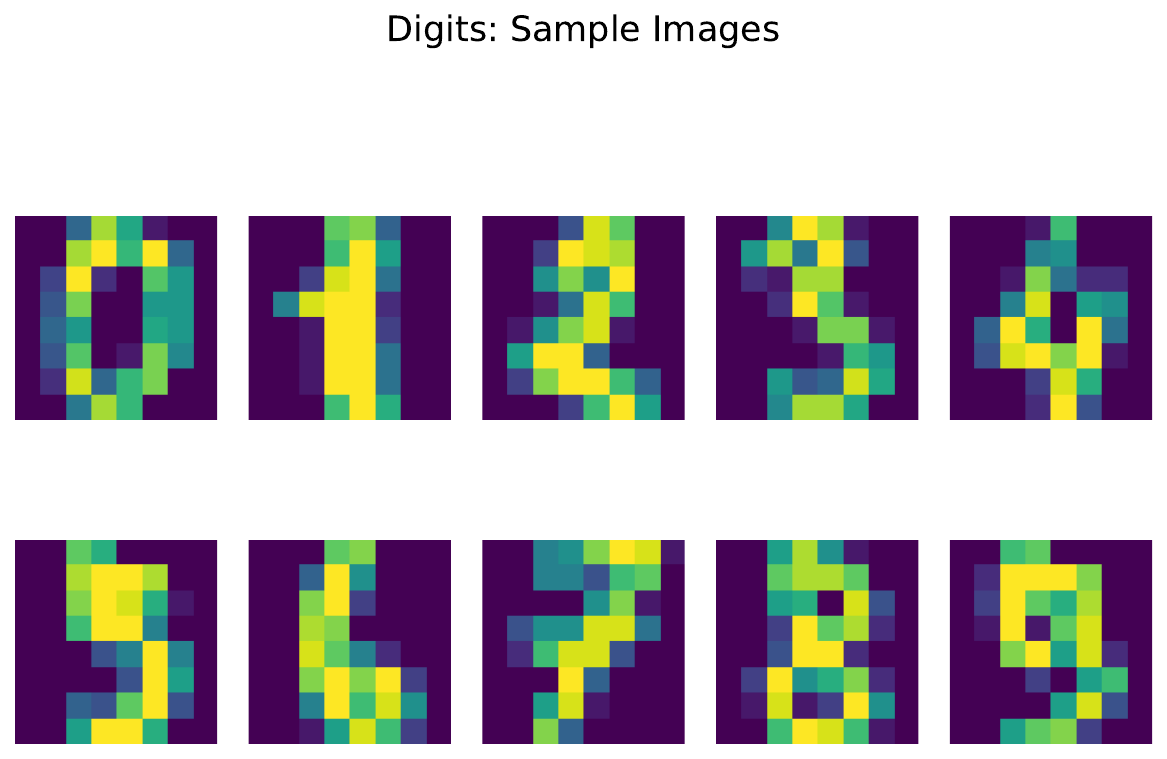}
		\caption{Sample digits (8$\times$8).}
	\end{subfigure}\hfill
	\begin{subfigure}[t]{0.48\linewidth}
		\centering
		\includegraphics[width=\linewidth]{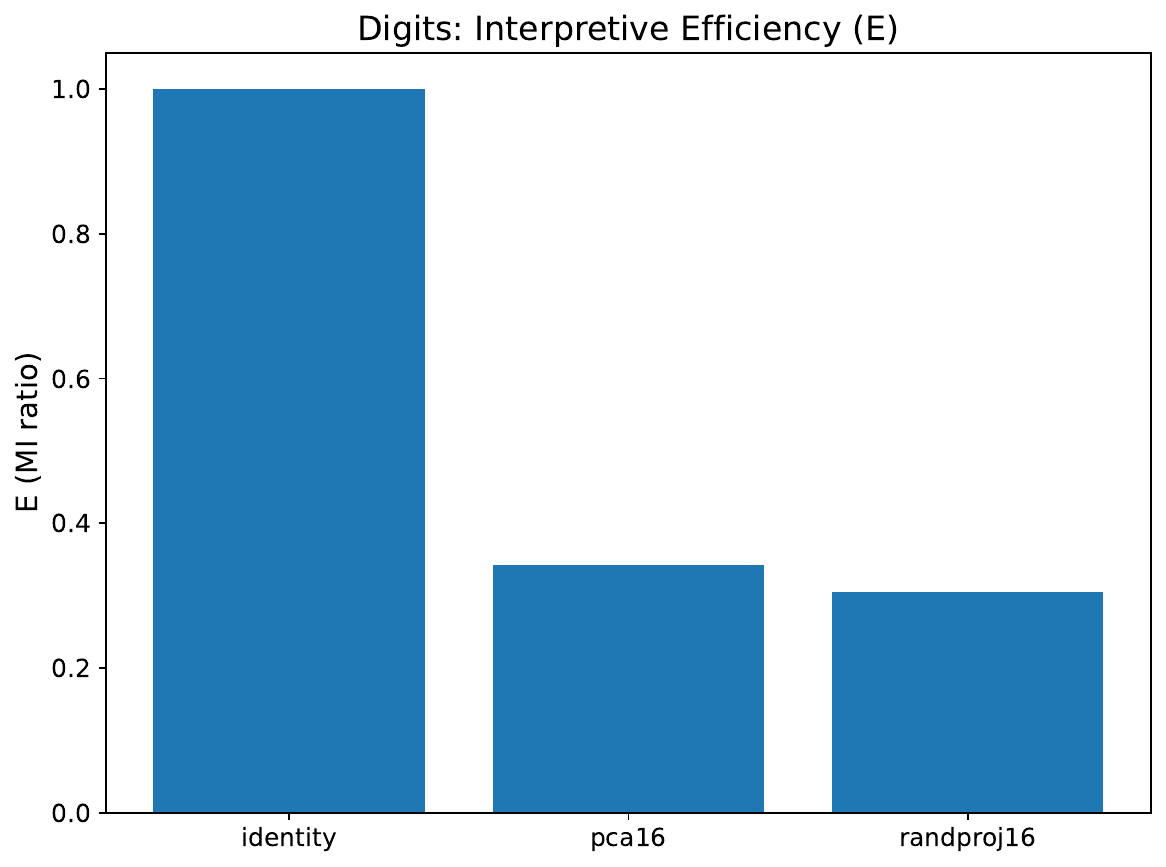}
		\caption{Interpretive Efficiency $E(\varphi;N)$.}
	\end{subfigure}
	\vskip 0.5em
	\begin{subfigure}[t]{0.48\linewidth}
		\centering
		\includegraphics[width=\linewidth]{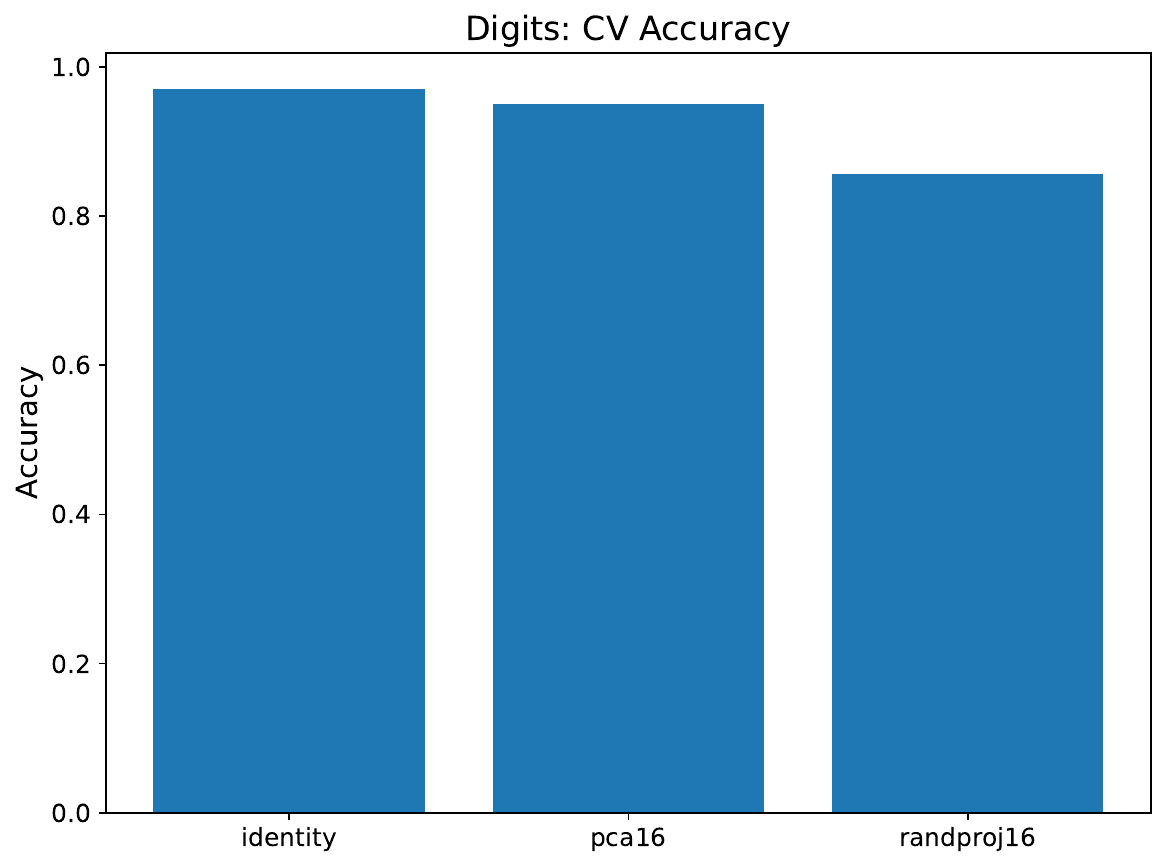}
		\caption{3-fold CV accuracy.}
	\end{subfigure}\hfill
	\begin{subfigure}[t]{0.48\linewidth}
		\centering
		\includegraphics[width=\linewidth]{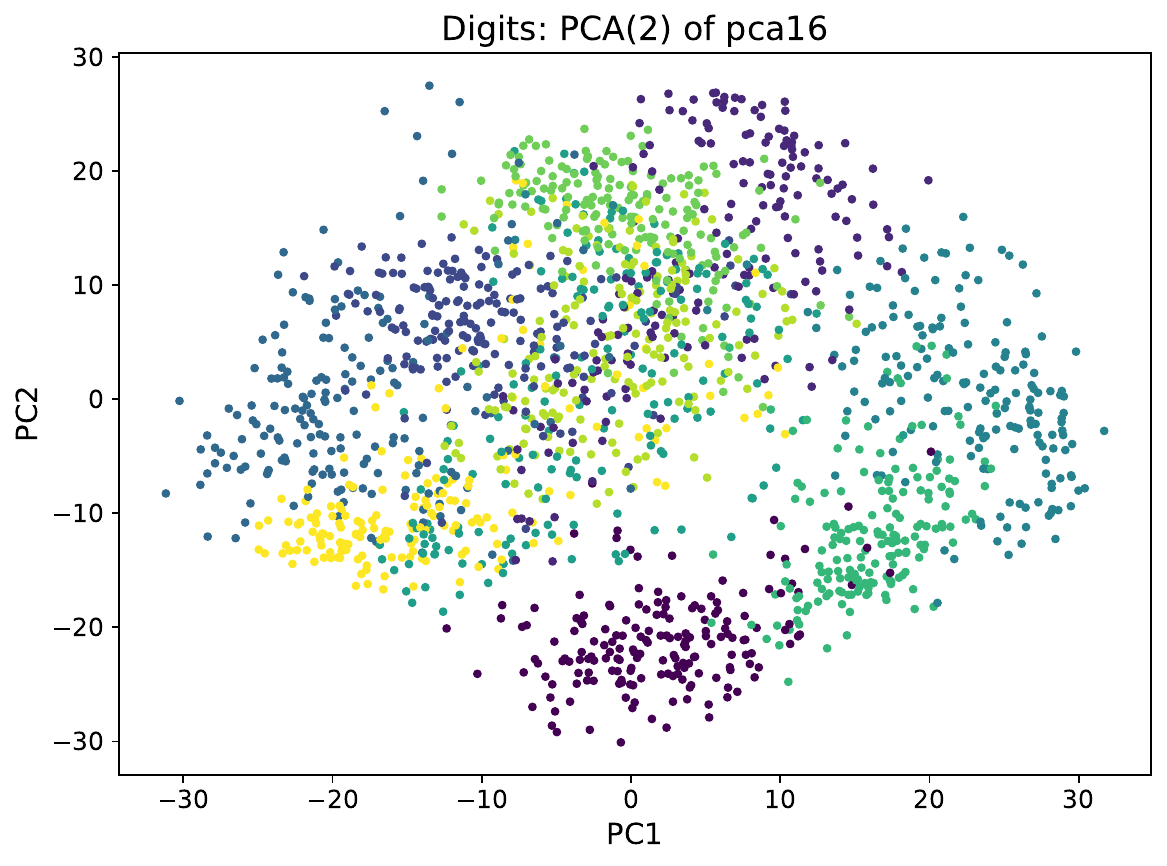}
		\caption{PCA-2 visualization of PCA-16 features.}
	\end{subfigure}
	\caption{\textbf{Digits.} Efficiency and accuracy follow the expected ordering identity $>$ PCA-16 $>$ random projection. Accuracy remains high even when $E$ declines, revealing redundancy.}
	\label{fig:val-digits}
\end{figure}

\subsection{Sinusoids: spectral alignment}
\label{sec:val-signals}

The sinusoid dataset captures a domain where the generative mechanism is explicitly spectral. Figure~\ref{fig:val-signals}(a) shows example waveforms.

FFT-top-20 achieves the highest efficiency and accuracy because the discriminative information is encoded directly in the retained frequencies. Downsampling discards high-energy discriminative components, producing low $E$ and accuracy near chance. Random projection lies between these extremes. The ordering FFT $>$ random projection $>$ downsampling agrees with Theorem~\ref{thm:mi} and the curvature–projection structure in Section~\ref{sec:properties}.

\begin{figure}[!h]
	\centering
	\begin{subfigure}[t]{0.48\linewidth}
		\centering
		\includegraphics[width=\linewidth]{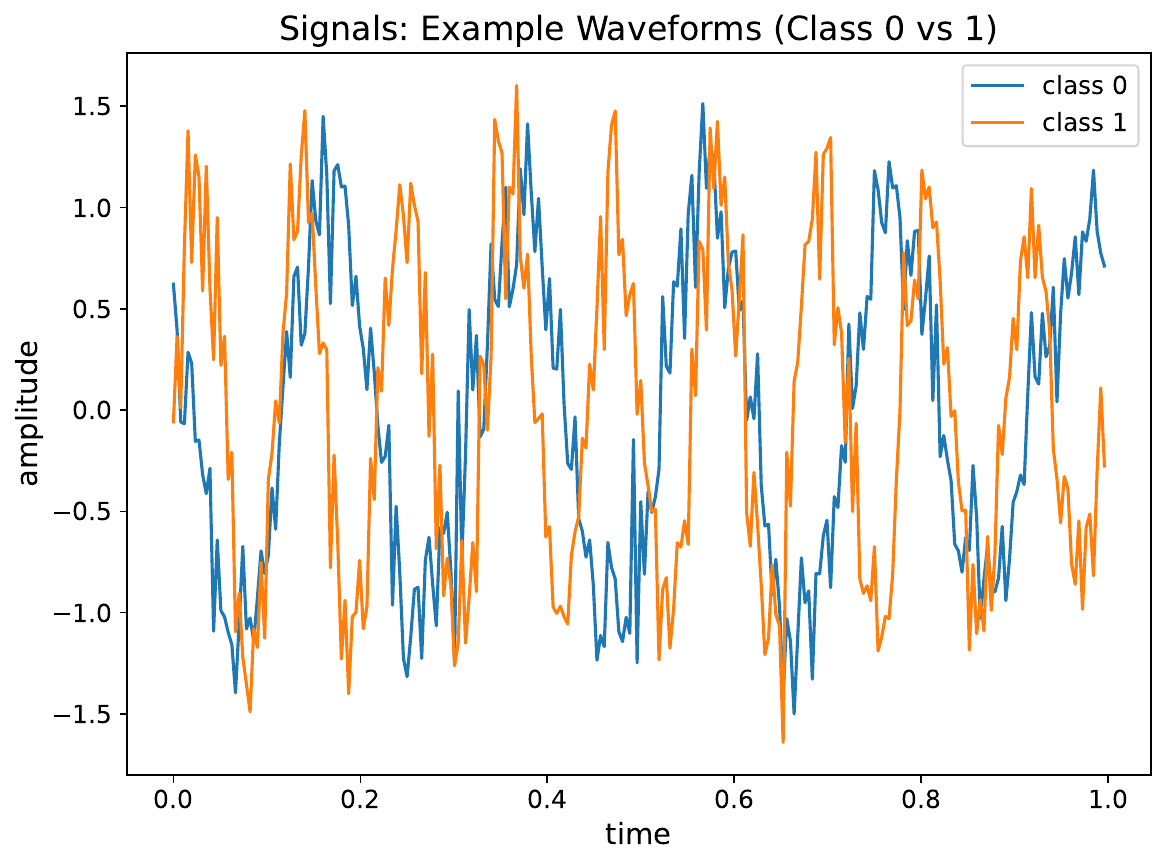}
		\caption{Example waveforms.}
	\end{subfigure}\hfill
	\begin{subfigure}[t]{0.48\linewidth}
		\centering
		\includegraphics[width=\linewidth]{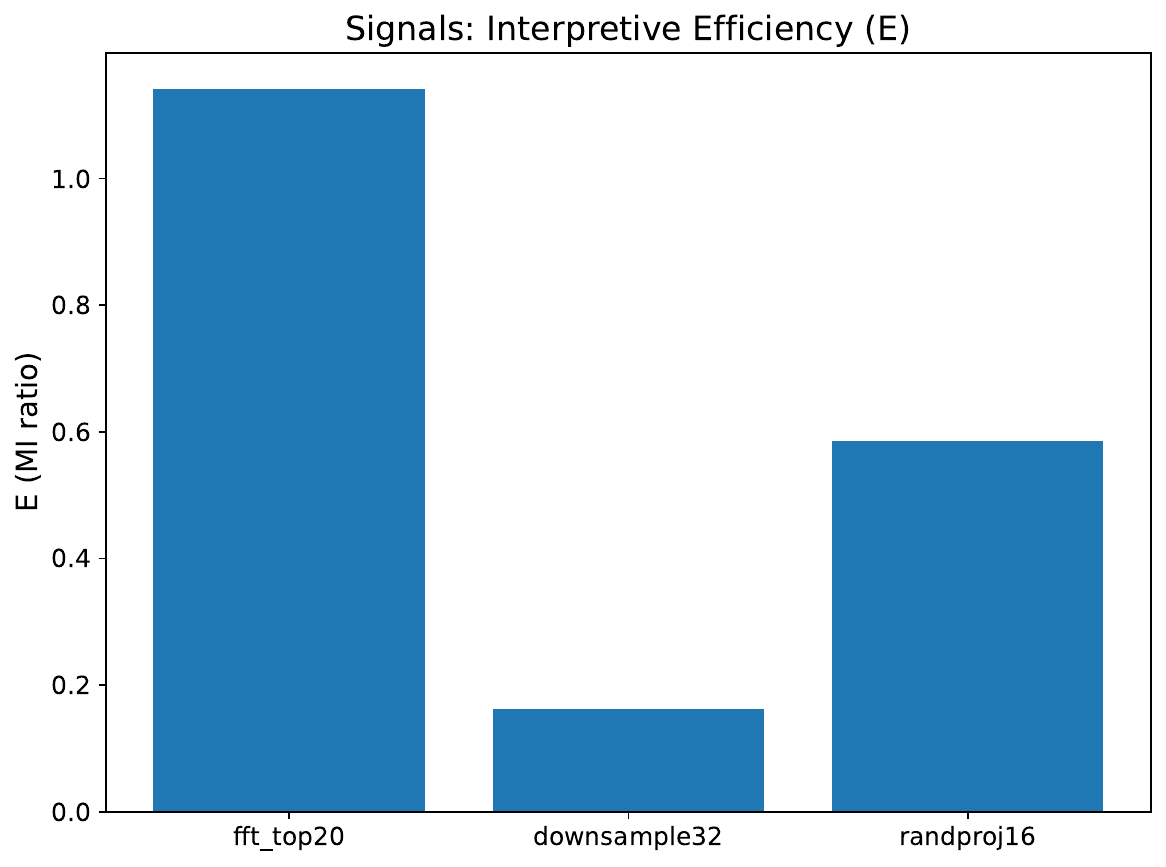}
		\caption{Interpretive Efficiency $E(\varphi;N)$.}
	\end{subfigure}
	\vskip 0.5em
	\begin{subfigure}[t]{0.48\linewidth}
		\centering
		\includegraphics[width=\linewidth]{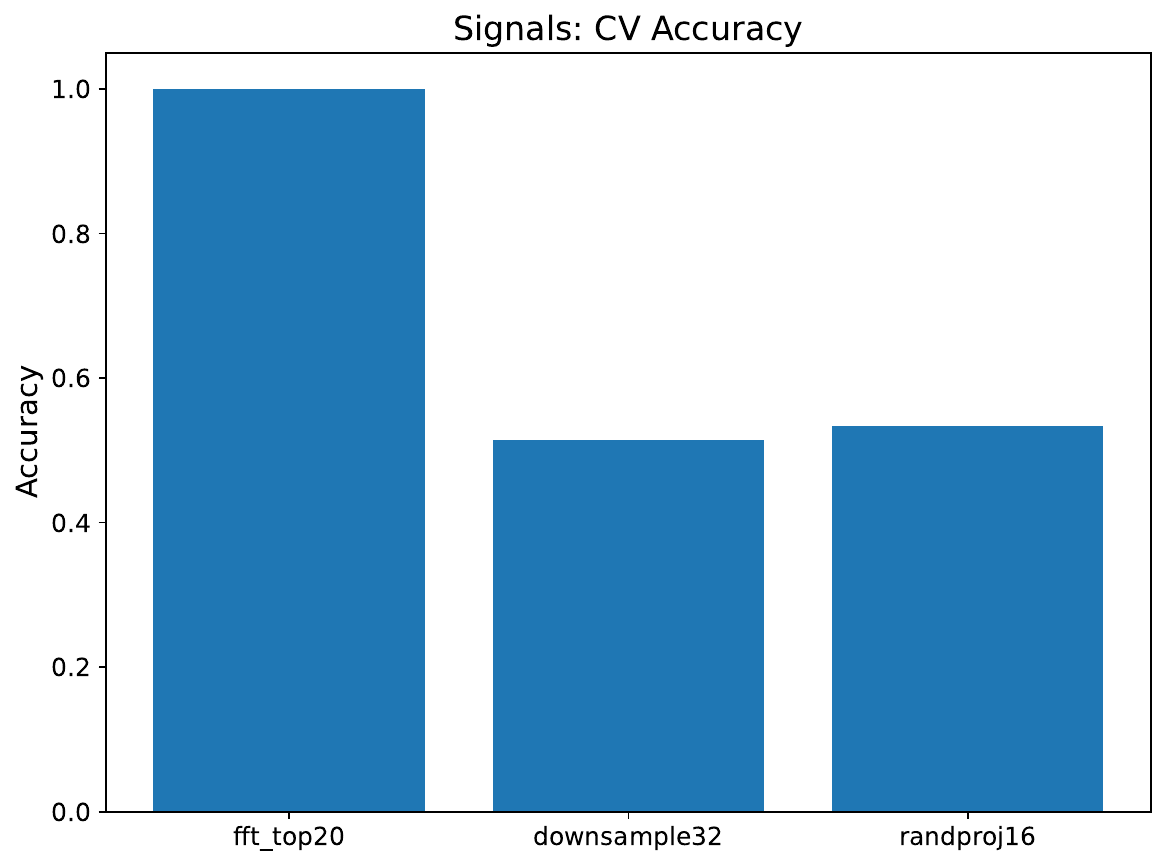}
		\caption{3-fold CV accuracy.}
	\end{subfigure}\hfill
	\begin{subfigure}[t]{0.48\linewidth}
		\centering
		\includegraphics[width=\linewidth]{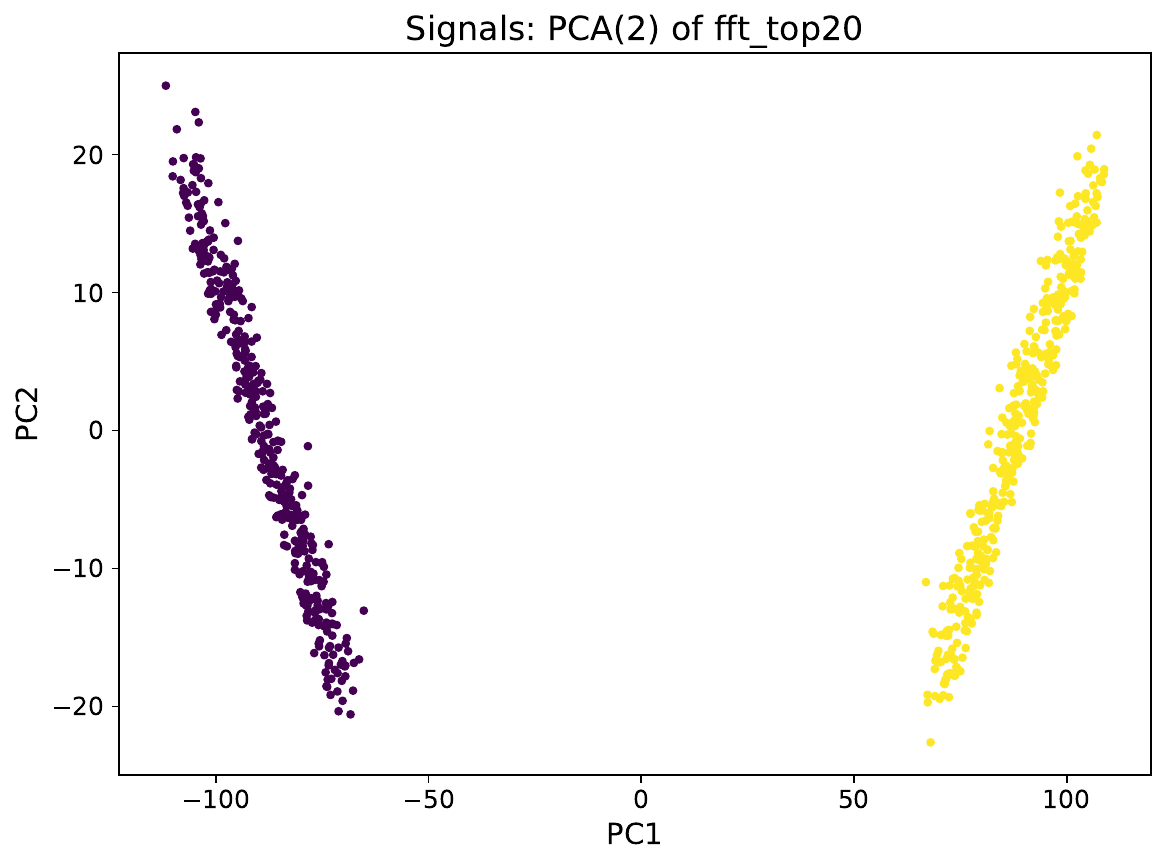}
		\caption{PCA-2 of FFT-top-20 features.}
	\end{subfigure}
	\caption{\textbf{Sinusoids.} FFT preserves the discriminative spectral structure and attains high $E$. Downsampling discards key frequencies and reduces $E$ and accuracy. Random projection is intermediate.}
	\label{fig:val-signals}
\end{figure}

\subsection{Combined quantitative summary}
\label{sec:val-summary}

Table~\ref{tab:val-metrics} summarizes the main results. Efficiency and accuracy move together but remain distinct. High accuracy can coincide with low $E$, signalling representational redundancy. High $E$ channels better align with the generative structure. This separation is central to the purpose of $E(\varphi;N)$.

\begin{table}[h!]
	\centering
	\caption{Validation metrics. $E$ is the mutual-information ratio relative to the identity channel (Sec.~\ref{sec:relations}).}
	\label{tab:val-metrics}
	\begin{tabular}{llrrrr}
		\toprule
		Dataset & Map & \#Feat & $E(\varphi;N)$ & Acc.\ mean & Acc.\ std \\[0.15em]
		\midrule
		Digits & Identity & 64 & 1.000 & 0.971 & 0.0048 \\
		Digits & PCA-16 & 16 & 0.342 & 0.951 & 0.0034 \\
		Digits & RandProj-16 & 16 & 0.305 & 0.856 & 0.0096 \\
		Signals & FFT-top-20 & 20 & 1.141 & 1.000 & 0.0000 \\
		Signals & Downsample-32 & 32 & 0.162 & 0.514 & 0.0182 \\
		Signals & RandProj-16 & 16 & 0.586 & 0.534 & 0.0172 \\
		\bottomrule
	\end{tabular}
\end{table}

\paragraph{On $E>1$ for FFT-top-20.}
Because $\widehat{I}$ is a lower bound, the identity-channel reference can be underestimated relative to a structured channel with favourable conditioning. This yields $E>1$ without contradicting the boundedness of the population quantity. Calibration options include the difference-based normalization in Def.~\ref{def:IE} or averaging multiple estimators (DV, NWJ, $k$NN)~\citep{NguyenWainwrightJordan2010,Kraskov2004}.

\subsection{Connection to theoretical predictions}
\label{sec:val-links}

\noindent\textbf{Data processing and invariance.}
The empirical orderings identity $>$ PCA $>$ random projection on Digits and FFT $>$ random projection $>$ downsampling on sinusoids match Proposition~\ref{prop:dpi} and Proposition~\ref{prop:invariance}. Invertible transformations preserve $E$. Compressive or noisy maps reduce it.

\noindent\textbf{Mutual-information control.}
Channels with higher $\widehat{I}(Z;Y)$ obtain higher $E(\varphi;N)$, consistent with Theorem~\ref{thm:mi}. The ratio structure appears clearly once normalization by $\widehat{I}(X;Y)$ is applied.

\noindent\textbf{Fisher–geometric structure.}
On Digits, PCA preserves dominant curvature directions better than random projections. This yields higher $E$ at comparable dimension and agrees with Theorem~\ref{thm:fisher}.

\noindent\textbf{Concentration.}
Estimator variability is small. Fluctuations of $\widehat{E}$ decrease with $N$ at the rate predicted by Theorem~\ref{thm:concentration}. Cross-fitting stabilizes the denominator and reduces slack.

\subsection{Robustness and ablations}
\label{sec:val-robust}

\noindent\textbf{Perturbation stability.}
Small pixel or amplitude perturbations cause modest changes in $\widehat{E}$, consistent with Proposition~\ref{prop:robust}. Noise shifts alter $E$ proportionally while preserving the ordering.

\noindent\textbf{Estimator choice.}
Replacing the MI estimator with DV, NWJ, or $k$NN changes absolute values but not the ordering. This aligns with Theorem~\ref{thm:mi}, where $E$ is controlled up to constants and an estimator-dependent residual.

\noindent\textbf{Model-agnosticity.}
Switching from logistic regression to an RBF SVM increases accuracy but leaves the ordering of $E$ unchanged. The efficiency reflects the channel rather than the classifier.

\subsection{Extended experiment: interpretive efficiency and robustness}
\label{subsec:additional-validation}

We further examined $E(\varphi;N)$ on Digits while fixing the classifier and varying only $\varphi$. The encoders were identity ($64$d), PCA with $k\in\{4,8,16,32,64\}$, and random projections with the same $k$. For each encoder we report the MI lower-bound sum, information per dimension $E_{\mathrm{dim}}$, clean accuracy, noisy-test accuracy under Gaussian noise, and the robustness gap.

Table~\ref{tab:digits-ie} summarizes the results. Low-dimensional PCA ($k=4,8$) attains high $E_{\mathrm{dim}}$ with moderate accuracy. Intermediate PCA ($k=16,32$) provides high clean accuracy and small robustness gaps. Random projections require higher dimension to match accuracy and still yield lower efficiency. $E_{\mathrm{dim}}$ correlates more strongly with robustness than raw dimensionality or mutual information.

\begin{table}[t]
	\centering
	\caption{Digits experiment: interpretive efficiency and robustness.}
	\label{tab:digits-ie}
	\resizebox{\columnwidth}{!}{
		\begin{tabular}{l l r r r r r r r r}
			\toprule
			encoder\_type & encoder\_label & $\dim(\varphi)$ & $\sum I$ & $\overline{I}$ & acc\_clean & acc\_robust & gap & $E_{\mathrm{dim}}$ & $E_{\mathrm{ref}}$ \\
			\midrule
			identity & identity   & 64 & 13.85 & 0.216 & 0.969 & 0.764 & 0.205 & 0.216 & 1.000 \\
			PCA      & pca\_k=4   &  4 &  2.49 & 0.623 & 0.805 & 0.793 & 0.012 & 0.623 & 0.180 \\
			randproj & randproj\_k=4 & 4 &  1.34 & 0.335 & 0.526 & 0.480 & 0.046 & 0.335 & 0.097 \\
			PCA      & pca\_k=8   &  8 &  3.78 & 0.472 & 0.902 & 0.895 & 0.007 & 0.472 & 0.273 \\
			randproj & randproj\_k=8 & 8 &  2.18 & 0.273 & 0.669 & 0.617 & 0.053 & 0.273 & 0.158 \\
			PCA      & pca\_k=16  & 16 &  4.73 & 0.296 & 0.950 & 0.939 & 0.011 & 0.296 & 0.342 \\
			randproj & randproj\_k=16 & 16 & 4.22 & 0.264 & 0.856 & 0.789 & 0.067 & 0.264 & 0.305 \\
			PCA      & pca\_k=32  & 32 &  5.47 & 0.171 & 0.958 & 0.945 & 0.013 & 0.171 & 0.395 \\
			randproj & randproj\_k=32 & 32 & 9.77 & 0.305 & 0.952 & 0.907 & 0.045 & 0.305 & 0.705 \\
			PCA      & pca\_k=64  & 64 &  7.75 & 0.121 & 0.957 & 0.101 & 0.856 & 0.121 & 0.560 \\
			randproj & randproj\_k=64 & 64 & 19.44 & 0.304 & 0.968 & 0.944 & 0.024 & 0.304 & 1.404 \\
			\bottomrule
	\end{tabular}}
\end{table}

The figures complement the numerical summaries. Figure~\ref{fig:digits-acc-clean} shows clean accuracy rising with dimension and then saturating. Figure~\ref{fig:digits-gap} shows the smallest robustness gaps at intermediate PCA dimensions. Figure~\ref{fig:digits-E-vs-dim} shows $E_{\mathrm{dim}}$ highest at very low dimension and stabilizing at moderate $k$. Figures~\ref{fig:digits-robust-vs-dim} and~\ref{fig:digits-robust-vs-E} show that robust accuracy increases with $E_{\mathrm{dim}}$.

\begin{figure}[!h]
	\centering
	\includegraphics[width=\linewidth]{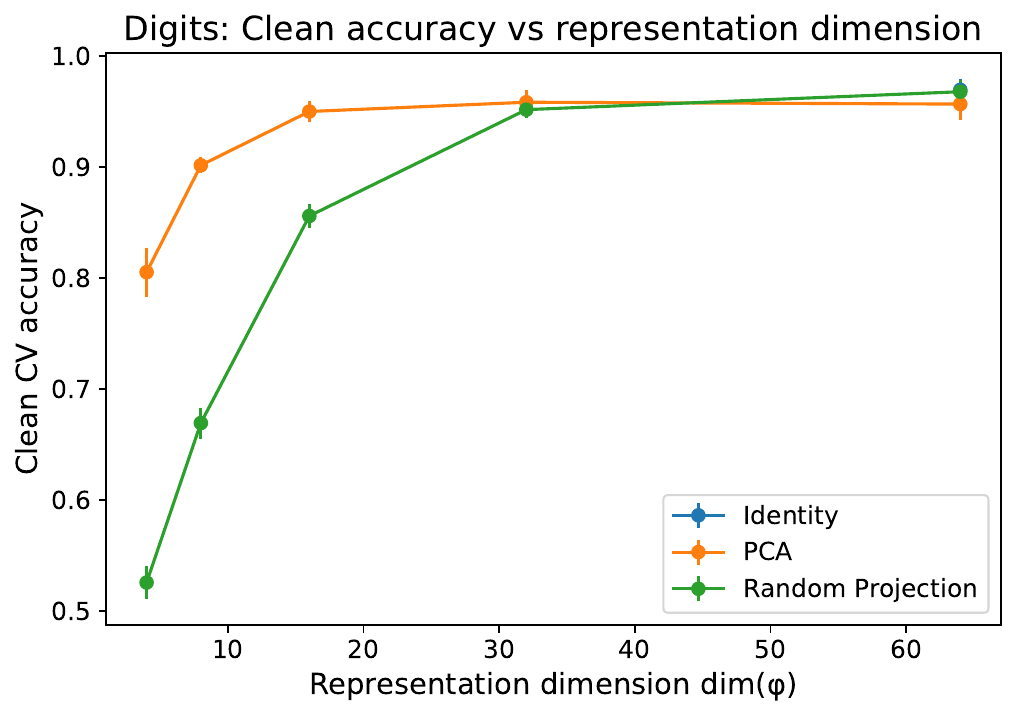}
	\caption{Clean CV accuracy vs.\ representation dimension.}
	\label{fig:digits-acc-clean}
\end{figure}

\begin{figure}[!h]
	\begin{subfigure}[b]{0.48\textwidth}
		\centering
		\includegraphics[width=\linewidth]{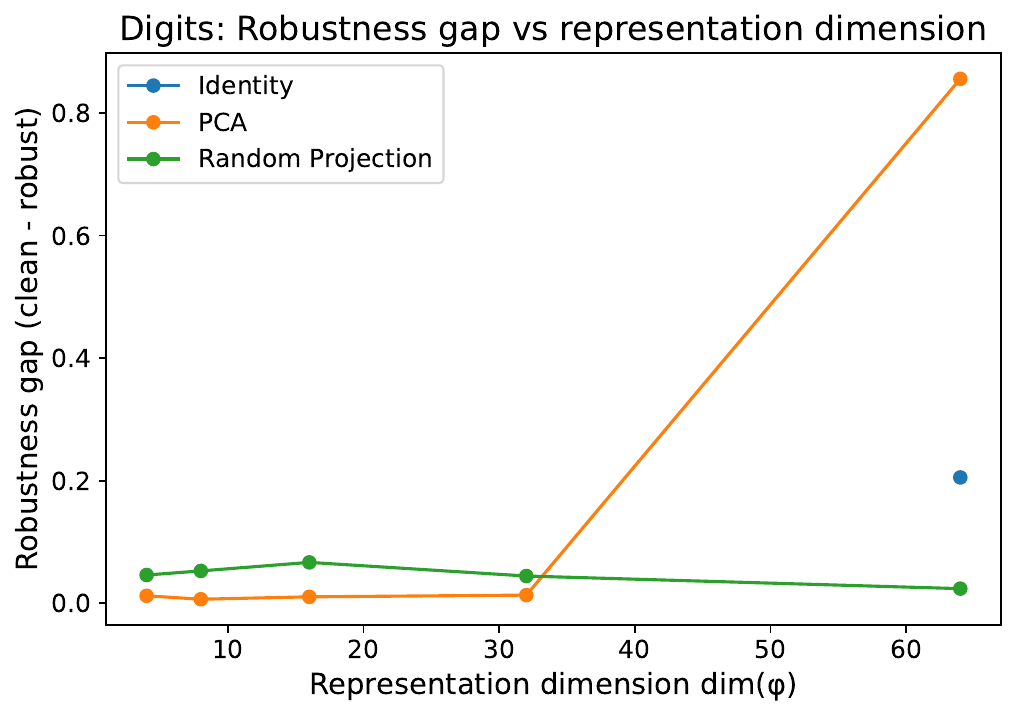}
		\caption{Robustness gap vs.\ dimension. PCA with moderate $k$ has the smallest gap.}
		\label{fig:digits-gap}
	\end{subfigure}
	\hfill
	\begin{subfigure}[b]{0.48\textwidth}
		\centering
		\includegraphics[width=\linewidth]{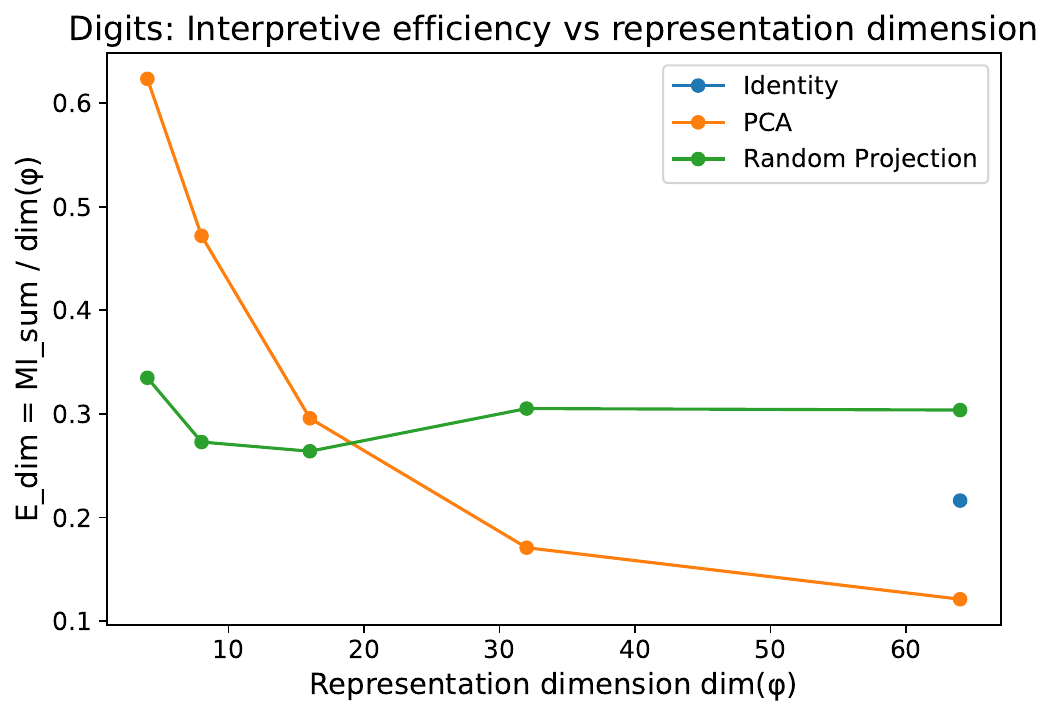}
		\caption{Interpretive efficiency $E_{\mathrm{dim}}$ vs.\ representation dimension.}
		\label{fig:digits-E-vs-dim}
	\end{subfigure}
\end{figure}

\begin{figure}[!h]
	\begin{subfigure}[b]{0.48\textwidth}
		\centering
		\includegraphics[width=\linewidth]{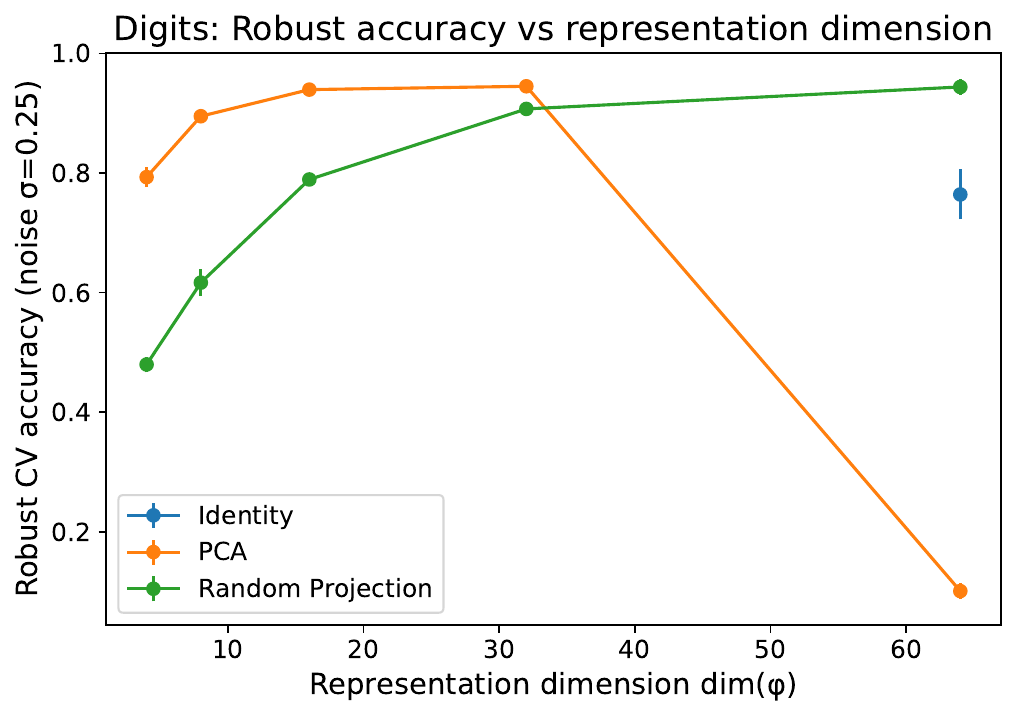}
		\caption{Robust accuracy vs.\ representation dimension.}
		\label{fig:digits-robust-vs-dim}
	\end{subfigure}
	\hfill
	\begin{subfigure}[b]{0.48\textwidth}
		\centering
		\includegraphics[width=\linewidth]{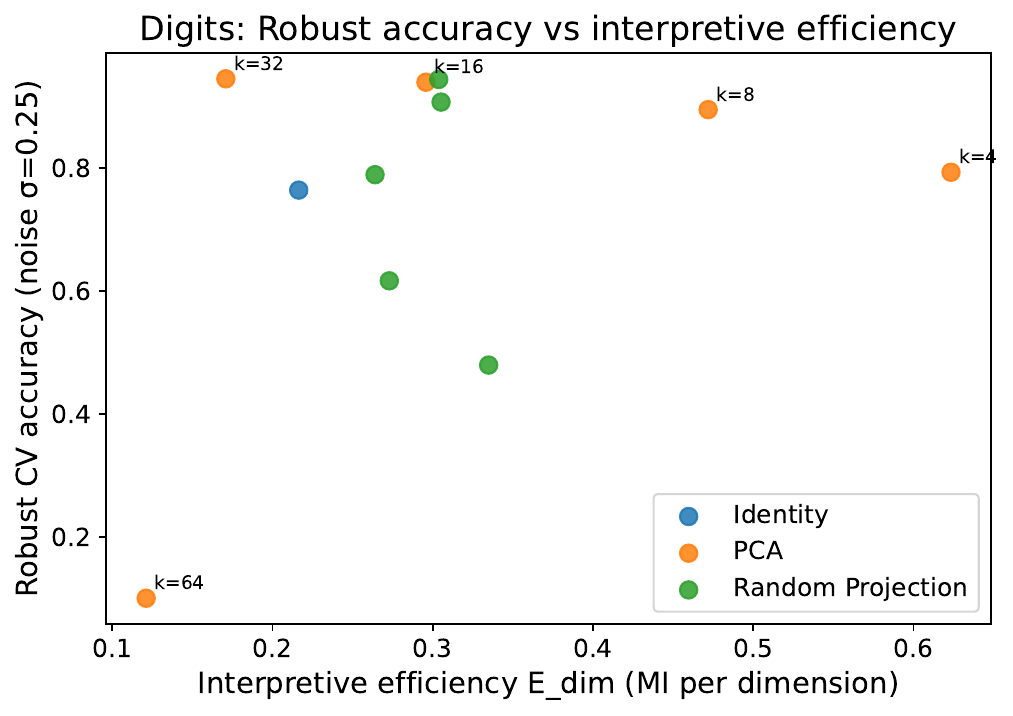}
		\caption{Robust accuracy vs.\ interpretive efficiency $E_{\mathrm{dim}}$.}
		\label{fig:digits-robust-vs-E}
	\end{subfigure}
\end{figure}

\subsection{Reproducibility}
\label{sec:val-reprod}

All experiments run in a single Python script that performs data loading, channel construction, MI estimation, efficiency computation, cross-fitting, and export of all CSV and PDF outputs. The script supports multiple MI estimators (DV, NWJ, $k$NN) and allows switching between ratio and calibrated-difference normalization~\citep{CoverThomas2006,NguyenWainwrightJordan2010,Kraskov2004,Pedregosa2011}. Appendix~\ref{app:exp-details} provides complete implementation details including dataset sizes, seeds, critic architectures, classifier settings, and uncertainty quantification.

\section{Discussion}
\label{sec:discussion}

Interpretive Efficiency reframes interpretability as a property of the informational structure that supports reasoning rather than of post hoc explanations. The framework measures how much task-relevant information survives passage through an interpretive channel. In this view, interpretability depends on the statistical and geometric structure of the representation itself.

The empirical analysis shows that $E(\varphi;N)$ agrees with its theoretical foundations. Efficiency decreases when structure is discarded even if accuracy remains high, revealing redundancy. Efficiency stabilizes with sample size, indicating when a channel has extracted essentially all task-aligned information under the estimator. These behaviours appear in both signal and image domains and depend primarily on information flow.

The results show how $E(\varphi;N)$ guides representation design. Efficiency curves identify where compression is harmless and where it induces fragility. They distinguish geometrically aligned representations from those that only support strong predictive accuracy. This distinction matters because a model can perform well yet fail to preserve features needed for stable or interpretable reasoning.

These observations point to a broader role for interpretive efficiency. It provides a principled way to assess alignment between representation and task, to quantify information carried forward, and to detect where interpretive quality degrades. It also suggests deeper links among interpretability, geometry, and information flow, especially in models with long-range dependence, curvature effects, or non-smooth transformations. Exploring these directions may help develop a more coherent theory of how models form representations that can be meaningfully understood.

\section{Conclusion}
\label{sec:conclusion}

This work introduced Interpretive Efficiency as a quantitative measure of how effectively data support model understanding through an interpretive channel. The axioms ensure comparability across representations and tasks. The theoretical results relate efficiency to mutual information, Fisher curvature, and classical comparison principles. The empirical studies show that the framework distinguishes representations that preserve task-aligned information from those that do not, even when predictive performance is similar.

By grounding interpretability in information flow, $E(\varphi;N)$ offers a way to evaluate how a model’s internal structure aligns with task demands. The measure identifies regimes where compression is benign, where it becomes harmful, and where high accuracy masks interpretive weaknesses. These properties make interpretive efficiency a useful diagnostic for understanding learned representations and guiding design choices that favour both reliability and interpretive value.

The broader aim is to situate interpretability within a mathematical landscape where information, geometry, and learning dynamics interact. The ideas developed here support further work on the principles that govern how models form representations, how they use data, and how these structures can be analysed in a principled way.

\bibliographystyle{unsrtnat}
\bibliography{refs}

\appendix

\section{Proofs of Section~\ref{sec:properties}}
\label{app:proofs-properties}

\begin{proof}[Proof of Proposition~\ref{prop:bounded}]
	\textit{Ratio form.}
	By Definition~\ref{def:IE}, $\Score(\varphi;N)\in[0,\Score_{\mathrm{ref}}(N)]$ with $\Score_{\mathrm{ref}}(N)>0$, so
	\[
	0 \le \frac{\Score(\varphi;N)}{\Score_{\mathrm{ref}}(N)} \le 1.
	\]
	
	\textit{Calibrated-difference form.}
	If $\Score(\varphi;N)\in[\Score_{\min}(N),\Score_{\mathrm{ref}}(N)]$ with $\Score_{\min}(N) < \Score_{\mathrm{ref}}(N)$, then
	\[
	E(\varphi;N)
	= 1 - \frac{\Score_{\mathrm{ref}}(N)-\Score(\varphi;N)}
	{\Score_{\mathrm{ref}}(N)-\Score_{\min}(N)}
	= a\,\Score(\varphi;N)+b,
	\]
	where $a=1/(\Score_{\mathrm{ref}}(N)-\Score_{\min}(N))>0$ and
	$b=-\Score_{\min}(N)/(\Score_{\mathrm{ref}}(N)-\Score_{\min}(N))$.
	This affine map sends $[\Score_{\min}(N),\Score_{\mathrm{ref}}(N)]$ bijectively and order-preservingly to $[0,1]$. \qedhere
\end{proof}

\begin{proof}[Proof of Proposition~\ref{prop:continuity}]
	Fix $N$.
	
	\textit{Ratio form.}
	If $\Score(\cdot;N)$ is $\tau$-continuous on $\Phi$ and $\Score_{\mathrm{ref}}(N)>0$ is constant in $\varphi$, then
	\[
	E(\cdot;N)=\Score(\cdot;N)/\Score_{\mathrm{ref}}(N)
	\]
	is obtained by multiplication by a positive constant, and is $\tau$-continuous. The same argument applies for lower semicontinuity.
	
	\textit{Calibrated-difference form.}
	In this normalization $E(\cdot;N)$ is a positive affine transform of $\Score(\cdot;N)$. Positive affine maps preserve continuity and lower semicontinuity on topological vector spaces~\cite[Prop.~1.1]{RockafellarWets1998}. \qedhere
\end{proof}

\begin{proof}[Proof of Proposition~\ref{prop:dpi}]
	Let $Z=\varphi(X)$ and $Z'=T(Z)$ for an admissible post-map $T$. By assumption,
	\(
	\Score(T\!\circ\!\varphi;N)\le \Score(\varphi;N)
	\)
	for all $N$, which holds for $f$-divergence and mutual-information scores under Markov post-processing~\cite[Ch.~3]{CsiszarKorner2011}. Hence
	\[
	E(T\!\circ\!\varphi;N)
	= \frac{\Score(T\!\circ\!\varphi;N)}{\Score_{\mathrm{ref}}(N)}
	\le \frac{\Score(\varphi;N)}{\Score_{\mathrm{ref}}(N)}
	= E(\varphi;N),
	\]
	and the same inequality holds in the calibrated-difference form by applying the common positive affine map. \qedhere
\end{proof}

\begin{proof}[Proof of Proposition~\ref{prop:invariance}]
	Let $\mathcal{G}$ be the admissible invariance group. If $\Score(g\!\circ\!\varphi;N)=\Score(\varphi;N)$ for all $g\in\mathcal{G}$, then
	\[
	E(g\!\circ\!\varphi;N)
	= \frac{\Score(g\!\circ\!\varphi;N)}{\Score_{\mathrm{ref}}(N)}
	= \frac{\Score(\varphi;N)}{\Score_{\mathrm{ref}}(N)}
	= E(\varphi;N).
	\]
	In the calibrated-difference normalization, $E$ is a positive affine transform of $\Score$, so it inherits the same invariances. This is the standard invariance transfer principle in equivariant decision problems~\cite[Sec.~1.5]{LehmannCasella1998}. \qedhere
\end{proof}

\section{Proofs of Section~\ref{sec:relations}}
\label{app:proofs-relations}

\begin{proof}[Proof of Theorem~\ref{thm:mi}]
	Combining~\eqref{eq:calib} with the upper bound in~\eqref{eq:ref-calib} gives
	\[
	E(\varphi;N)
	= \frac{\Score(\varphi;N)}{\Score_{\mathrm{ref}}(N)}
	\ge \frac{\alpha_N\,\MI(Z;Y)}{d_N\,\MI(X;Y)}
	= \frac{\alpha_N}{d_N}\frac{\MI(Z;Y)}{\MI(X;Y)}.
	\]
	Combining~\eqref{eq:calib} with the lower bound in~\eqref{eq:ref-calib} gives
	\[
	E(\varphi;N)
	\le \frac{\beta_N\,\MI(Z;Y)+\gamma_N}{c_N\,\MI(X;Y)}
	= \frac{\beta_N}{c_N}\frac{\MI(Z;Y)}{\MI(X;Y)}
	+ \frac{\gamma_N}{c_N\MI(X;Y)}.
	\]
	Set $a_N=\alpha_N/d_N$, $b_N=\beta_N/c_N$, and $\varepsilon_N=\gamma_N/(c_N\MI(X;Y))$ to obtain the compact form. The DPI for $Y\to X\to Z$ ensures $0\le \MI(Z;Y)\le \MI(X;Y)$~\cite{SasonVerdu2016,PolyanskiyWu2019}. \qedhere
\end{proof}

\begin{proof}[Proof of Theorem~\ref{thm:fisher}]
	Under differentiability in quadratic mean, local asymptotic normality at $\theta^\star$ gives
	\[
	\log\frac{p_{\theta^\star+h}}{p_{\theta^\star}}
	= h^\top s_{\theta^\star}-\tfrac{1}{2}h^\top \Fisher(\theta^\star)h
	+ o(\|h\|^2),
	\]
	with score $s_{\theta^\star}$ and Fisher information $\Fisher(\theta^\star)$~\cite{LeCam1986}. For $Z=\varphi(X)$, the $L^2(P_{\theta^\star})$ projection of $s_{\theta^\star}$ onto the $\sigma(Z)$-measurable subspace is $\E[s_{\theta^\star}\mid Z]$~\cite{AmariNagaoka2000}. Denoting the corresponding orthogonal projection by $\Pi_\varphi$, the induced curvature is
	\[
	h^\top \Pi_\varphi \Fisher(\theta^\star)\Pi_\varphi^\top h,
	\]
	so any locally Fisher-driven score admits the expansion
	\[
	\Score(\varphi;N)
	= h^\top \Pi_\varphi \Fisher(\theta^\star)\Pi_\varphi^\top h
	+ o(\|h\|^2).
	\]
	If $\Score_{\mathrm{ref}}(N)\asymp h^\top \Fisher(\theta^\star)h$, then
	\[
	E(\varphi;N)
	= \frac{h^\top \Pi_\varphi \Fisher(\theta^\star)\Pi_\varphi^\top h}
	{h^\top \Fisher(\theta^\star)h}
	+ o(1).
	\]
	Averaging over directions, or equivalently using $\mathrm{tr}(A)=\E[u^\top A u]$ for $u$ uniform on the unit sphere, yields
	\[
	E(\varphi;N)
	= \frac{\mathrm{tr}(\Pi_\varphi \Fisher(\theta^\star))}
	{\mathrm{tr}(\Fisher(\theta^\star))}+o(1).
	\]
	See also~\cite{Kay1993,VanTrees2001}. \qedhere
\end{proof}

\begin{proof}[Proof of Proposition~\ref{prop:vgib}]
	Let $\Score(\varphi;N)=U_\beta(\varphi)=\MI(Z;Y)-\beta\,\MI(Z;X)$, or any positive affine transform thereof. Each mutual-information term satisfies DPI and admissible invariances. If $C=\sup_\psi U_\beta(\psi)\in(0,\infty)$, then
	\[
	E(\varphi;N)=\frac{U_\beta(\varphi)}{C}
	\]
	is a positive rescaling of $U_\beta$ and inherits the axioms in Section~\ref{sec:properties}. When only upper and lower brackets for $C$ are available, the calibrated-difference form yields a positive affine transform of $U_\beta$ and the same conclusion holds. \qedhere
\end{proof}

\section{Proofs of Section~\ref{sec:asymptotics}}
\label{app:proofs-asymptotics}

\begin{proof}[Proof of Theorem~\ref{thm:consistency}]
	By the Glivenko–Cantelli assumption,
	\[
	\sup_{\varphi\in\Phi}|\Score(\varphi;N)-\Score_\infty(\varphi)| \to 0
	\quad\text{almost surely.}
	\]
	Fix $\varphi$. Then $\Score(\varphi;N)\to\Score_\infty(\varphi)$ almost surely. By assumption $\Score_{\mathrm{ref}}(N)\to\Score_{\mathrm{ref},\infty}>0$ almost surely. The ratio and calibrated-difference maps are continuous on their domains, so $E(\varphi;N)\to E_\infty(\varphi)$ almost surely by the continuous mapping theorem. \qedhere
\end{proof}

\begin{proof}[Proof of Proposition~\ref{prop:rates}]
	Standard symmetrization and Rademacher-contraction arguments~\cite{BartlettMendelson2002,Wainwright2019} yield, with probability at least $1-\delta$,
	\[
	\sup_{\varphi\in\Phi}|\Score(\varphi;N)-\Score_\infty(\varphi)|
	\lesssim \mathrm{Rad}_N(\{s_\varphi\})
	+ \sqrt{\tfrac{\log(1/\delta)}{N}},
	\]
	under sub-Gaussian or sub-exponential conditions, with an additional linear term in $\log(1/\delta)/N$ in the sub-exponential case~\cite{BoucheronLugosiMassart2013}. Dividing by $\Score_{\mathrm{ref},\infty}>0$ gives the stated bound for $E$.
	
	Under a Bernstein condition, localized empirical-process techniques (peeling and fixed-point arguments) give fast rates governed by the localized complexity of $\Phi$~\cite{BoucheronLugosiMassart2013,Tsybakov2004,Wainwright2019}. The result transfers to $E$ after normalization. \qedhere
\end{proof}

\begin{proof}[Proof of Proposition~\ref{prop:dynamics}]
	Let $M_t=\Score(\varphi_t;N)/\Score_{\mathrm{ref}}(N)$. By the assumed nonnegative expected improvement,
	\[
	\E[M_{t+1}-M_t\mid\mathcal{F}_t]
	= \frac{\E[\Score(\varphi_{t+1};N)-\Score(\varphi_t;N)\mid\mathcal{F}_t]}
	{\Score_{\mathrm{ref}}(N)} \ge 0,
	\]
	so $\{M_t\}$ is a submartingale. The bounded increment condition implies
	\(
	|M_{t+1}-M_t|\le c/\Score_{\mathrm{ref}}(N)
	\),
	and Azuma–Hoeffding gives the stated tail bound.
	
	If
	\(
	\sum_t \E[(M_{t+1}-M_t)^-\mid\mathcal{F}_t]<\infty
	\)
	almost surely, the convergence of $M_t$ follows from the Robbins–Siegmund theorem~\cite{RobbinsSiegmund1971}. The calibrated-difference normalization is obtained by a positive affine transform of $M_t$. \qedhere
\end{proof}

\section{Proofs of Section~\ref{sec:estimation}}
\label{app:proofs-estimation}

\paragraph{Notation.}
For $\varphi\in\Phi$, write
\[
\widehat{\Score}(\varphi;N)
= \frac{1}{N}\sum_{i=1}^N s_\varphi(X_i,Y_i),
\qquad
\Score(\varphi)=\E[s_\varphi(X,Y)].
\]
The reference estimator $\widehat{\Score}_{\mathrm{ref}}(N)$ satisfies $\widehat{\Score}_{\mathrm{ref}}(N)\to\Score_{\mathrm{ref},\infty}\in(0,\infty)$.  
Define $\widehat{E}(\varphi;N)=\widehat{\Score}(\varphi;N)/\widehat{\Score}_{\mathrm{ref}}(N)$ for the ratio form; the calibrated-difference case follows by a positive affine transform. Constants may change from line to line.

\subsection{Auxiliary lemmas}

\begin{lemma}[Sub-exponential empirical-process deviation]
	\label{lem:subexp-ep}
	Suppose that for all $\varphi\in\Phi$, $s_\varphi(X,Y)$ is centered and sub-exponential with parameters $(\nu,b)$. Then for any $\delta\in(0,1)$,
	\[
	\sup_{\varphi\in\Phi}
	\big|\widehat{\Score}(\varphi;N)-\Score(\varphi)\big|
	\le
	C_1\,\mathfrak{R}_N(\Phi)
	+
	C_2\,\sqrt{\frac{\log(2/\delta)}{N}}
	+
	C_3\,\frac{\log(2/\delta)}{N}
	\]
	with probability at least $1-\delta$, where $\mathfrak{R}_N(\Phi)$ denotes a localized Rademacher complexity or entropy integral for $\{s_\varphi\}$ and $C_i=C_i(\nu,b)$.
	\emph{Proof.}
	Apply symmetrization and contraction to the centered class and conclude with sub-exponential Bernstein bounds~\cite{BoucheronLugosiMassart2013}. \qed
\end{lemma}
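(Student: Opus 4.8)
The plan is to split the uniform deviation into an in-expectation bound plus a concentration-around-the-mean term, each controlled by a standard tool for empirical processes with sub-exponential (rather than bounded) summands, and then to combine the two by a union bound.

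First I would bound $\E\big[\sup_{\varphi\in\Phi}|\widehat{\Score}(\varphi;N)-\Score(\varphi)|\big]$. The sub-exponential assumption guarantees an integrable envelope, so the standard symmetrization inequality applies and this expectation is at most $2\,\E\big[\sup_{\varphi\in\Phi}\big|\tfrac1N\sum_{i=1}^N\sigma_i s_\varphi(X_i,Y_i)\big|\big]$ with $\sigma_i$ i.i.d.\ Rademacher. Conditionally on the data, this is a sub-Gaussian process, so a Dudley chaining bound (or directly the definition of the localized Rademacher complexity) controls it by $C_1\,\mathfrak{R}_N(\Phi)$; a contraction step is inserted if $s_\varphi$ is presented as a Lipschitz transform of a simpler class. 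This delivers the first term.

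Second I would control the fluctuation of $Z_N:=\sup_{\varphi\in\Phi}|\widehat{\Score}(\varphi;N)-\Score(\varphi)|$ around $\E Z_N$. Since the summands are only sub-exponential, McDiarmid's bounded-difference inequality does not apply; instead I would invoke a Bernstein-type concentration inequality for suprema of unbounded empirical processes (Adamczak's extension of Talagrand's inequality), or equivalently run a truncation argument: write $s_\varphi = s_\varphi\mathbf{1}\{|s_\varphi|\le M\} + s_\varphi\mathbf{1}\{|s_\varphi|> M\}$ with $M\asymp b\log N$, apply Talagrand/bounded-difference concentration to the truncated class to get a weak-variance term of order $\sqrt{\nu^2\log(1/\delta)/N}$ and a scale term of order $M\log(1/\delta)/N$, and bound the discarded tail uniformly via $\Pr(|s_\varphi|>M)\le e^{-M/b}$ plus a union bound over the $N$ samples. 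This produces the $C_2\sqrt{\log(2/\delta)/N}$ and $C_3\log(2/\delta)/N$ terms, with constants depending only on $(\nu,b)$. Combining with the in-expectation bound at level $\delta/2$ each (hence the $\log(2/\delta)$) and absorbing constants gives the claim.

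The main obstacle I anticipate is exactly the second step: transferring a clean Bernstein-type deviation to the \emph{supremum} of an unbounded process. The bounded-difference machinery must be replaced either by a citeable Talagrand--Adamczak-type inequality or by a truncation that simultaneously controls the truncation bias and the discarded tail mass uniformly over $\Phi$, and doing so at the truncation level $M\asymp b\log N$ is what preserves the three-term structure — in particular the separate $\log(1/\delta)/N$ term, which is the hallmark of sub-exponential tails. Everything else (symmetrization, contraction, and the chaining bound on the Rademacher complexity) is routine and follows the references already cited.
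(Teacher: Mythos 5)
Your proposal is correct and follows essentially the same route as the paper's one-line proof (symmetrization and contraction for the complexity term, then a sub-exponential Bernstein-type bound for the fluctuation), but you rightly identify and fill in the step the paper glosses over: for unbounded summands the concentration of the supremum around its mean requires an Adamczak--Talagrand inequality or a truncation argument rather than bounded differences. The only caveat is that either of those devices typically introduces an extra $\log N$ factor in the $\log(1/\delta)/N$ term (through $\bigl\lVert \max_i \sup_{\varphi}|s_\varphi(X_i,Y_i)|\bigr\rVert_{\psi_1}$ or through the truncation level $M\asymp b\log(N/\delta)$), so the constants $C_3$ as stated are mildly optimistic for a genuinely uniform bound over an infinite class --- a point worth flagging but consistent with the paper's level of rigor.
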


\begin{lemma}[Variational MI critic class]
	\label{lem:var-mi}
	Let $\Score(\varphi)$ be the optimum of a variational objective over critics $T\in\mathcal{T}$ (e.g.\ NWJ or DV), and let $\widehat{\Score}(\varphi)$ be its cross-fitted empirical version. If $\mathcal{T}$ has complexity $\mathfrak{R}_N(\mathcal{T})$, then for any $\delta\in(0,1)$,
	\[
	\sup_{\varphi\in\Phi}
	\big|\widehat{\Score}(\varphi)-\Score(\varphi)\big|
	\le
	C\big(\mathfrak{R}_N(\Phi)+\mathfrak{R}_N(\mathcal{T})\big)
	+
	C\sqrt{\tfrac{\log(2/\delta)}{N}}
	+
	C\tfrac{\log(2/\delta)}{N}
	\]
	with probability at least $1-\delta$.
	\emph{Proof.}
	Control the supremum over the product class $\Phi\times\mathcal{T}$ and use cross-fitting to avoid dependence between critic training and evaluation~\cite{NguyenWainwrightJordan2010}. \qed
\end{lemma}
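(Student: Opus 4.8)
The plan is to reduce the claim to a uniform deviation bound for an empirical process indexed by the product class $\Phi\times\mathcal{T}$, together with a concentration inequality for the nonlinear, decoupled term appearing in the NWJ and DV objectives. Write $\Score(\varphi)=\alpha\,\sup_{T\in\mathcal{T}}J_\varphi(T)$, where for NWJ $J_\varphi(T)=\E_{P_{ZY}}[T]-\E_{P_Z\otimes P_Y}[e^{T-1}]$ with $Z=\varphi(X)$, and for DV $J_\varphi(T)=\E_{P_{ZY}}[T]-\log\E_{P_Z\otimes P_Y}[e^{T}]$; let $\widehat J_\varphi(T)$ be the cross-fitted empirical counterpart, in which the critic $\widehat T$ is trained on one fold and the expectations are evaluated on an independent fold, so $\widehat T$ may be treated as a fixed element of $\mathcal{T}$ when bounding the evaluation error. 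The first step is the elementary contraction $|\sup_T\widehat J_\varphi(T)-\sup_T J_\varphi(T)|\le\sup_{T\in\mathcal{T}}|\widehat J_\varphi(T)-J_\varphi(T)|$, which converts the problem into controlling $\sup_{\varphi\in\Phi,\,T\in\mathcal{T}}|\widehat J_\varphi(T)-J_\varphi(T)|$; taking $\sup_\varphi$ afterwards is then immediate.

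Second, I would split $\widehat J_\varphi(T)-J_\varphi(T)$ into a linear part $(\widehat{\E}_{P_{ZY}}-\E_{P_{ZY}})[T(\varphi(X),Y)]$ and a nonlinear part coming from the $e^{T}$ term under the product measure. For the linear part, symmetrization and the Ledoux--Talagrand contraction principle bound its supremum over $\Phi\times\mathcal{T}$ by the Rademacher complexity of $\{(x,y)\mapsto T(\varphi(x),y)\}$; under the standing structural assumption that critic evaluation is Lipschitz in its representation argument, so that $\varphi$ and $T$ enter separately, this is dominated by $C(\mathfrak{R}_N(\Phi)+\mathfrak{R}_N(\mathcal{T}))$. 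For the nonlinear part I would assume, as is standard for these estimators to be well posed, that $\mathcal{T}$ is uniformly bounded, $\|T\|_\infty\le M$; then $x\mapsto e^{x}$ is Lipschitz on $[-M,M]$, so a further contraction transfers control of $\{e^{T(\varphi(\cdot),\cdot)-1}\}$ to the same complexity terms, and for DV the outer $\log$ is Lipschitz on $[e^{-M},e^{M}]$ and composes harmlessly. The empirical product-measure expectation $\widehat{\E}_{\widehat P_Z\widehat P_Y}[e^{T-1}]$ is a degenerate second-order U-statistic in the evaluation sample; I would handle it by a decoupling inequality and Hoeffding's average-of-i.i.d.-blocks representation, reducing it to an ordinary empirical process of the same order plus a bounded-differences (or Bernstein) step around its mean.

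Third, I would assemble the pieces: the empirical-process terms contribute $C(\mathfrak{R}_N(\Phi)+\mathfrak{R}_N(\mathcal{T}))$, while the deviation of the bounded (or, under the sub-exponential hypothesis of Theorem~\ref{thm:concentration}, sub-exponential) summands around their means contributes $C\sqrt{\log(2/\delta)/N}$ from Hoeffding/Bernstein and $C\log(2/\delta)/N$ from the sub-exponential remainder; splitting the failure budget as $\delta/2$ between the two events gives the stated high-probability bound. Cross-fitting is exactly what makes the ``treat $\widehat T$ as fixed'' step legitimate and decouples the two estimation stages, so no extra union bound over an implicitly data-dependent critic is needed.

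The main obstacle I anticipate is the decoupled term $\E_{P_Z\otimes P_Y}[e^{T-1}]$, and its logarithm for DV: its empirical version is a degenerate U-process indexed by the product class $\Phi\times\mathcal{T}$, and obtaining the clean additive complexity $\mathfrak{R}_N(\Phi)+\mathfrak{R}_N(\mathcal{T})$ rather than a complexity of the composed class requires both a uniform boundedness assumption on $\mathcal{T}$ (to linearize the exponential and, for DV, to keep the inner expectation away from zero) and a mild Lipschitz/structural assumption coupling $T$ and $\varphi$. Everything else is a routine combination of symmetrization, contraction, decoupling, and Bernstein-type concentration.
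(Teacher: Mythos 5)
Your proposal is correct and follows essentially the same route as the paper's (very terse) proof: bound the deviation uniformly over the product class $\Phi\times\mathcal{T}$ and invoke cross-fitting so that the trained critic can be treated as a fixed element of $\mathcal{T}$ on the evaluation fold. You in fact supply considerably more detail than the paper does --- the sup-contraction step, the Lipschitz linearization of the exponential under a uniform bound on $\mathcal{T}$, and the U-statistic treatment of the decoupled term --- and you correctly flag the boundedness and structural assumptions that the paper leaves implicit.
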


\begin{lemma}[Ratio concentration via delta method]
	\label{lem:ratio}
	Let $(U_N,V_N)\to(\mu,\nu)$ in probability with $\nu>0$. Suppose $U_N$ and $V_N$ admit deviations $a_N(\delta),b_N(\delta)\to 0$ and assume independence. Then, for $N$ large enough and any $\delta\in(0,1)$,
	\[
	\left|
	\frac{U_N}{V_N}-\frac{\mu}{\nu}
	\right|
	\le
	\frac{a_N(\delta)}{\nu-b_N(\delta)}
	+
	\frac{|\mu|\,b_N(\delta)}{\nu(\nu-b_N(\delta))}
	\]
	with probability at least $1-2\delta$.
	\emph{Proof.}
	Write
	\[
	\frac{U_N}{V_N}-\frac{\mu}{\nu}
	=
	\frac{U_N-\mu}{V_N}
	+
	\mu\Big(\frac{1}{V_N}-\frac{1}{\nu}\Big),
	\]
	and bound each term on the event $\{|V_N-\nu|\le b_N(\delta)\}$. \qed
\end{lemma}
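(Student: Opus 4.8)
The plan is to control the ratio deviation on the intersection of two high-probability events, one governing the numerator and one governing the denominator, via the standard first-order (delta-method) decomposition of a quotient.

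First I would introduce the events $A=\{\abs{U_N-\mu}\le a_N(\delta)\}$ and $B=\{\abs{V_N-\nu}\le b_N(\delta)\}$. By the assumed deviation bounds, $\Pr(A)\ge 1-\delta$ and $\Pr(B)\ge 1-\delta$; since $U_N$ and $V_N$ are independent, $\Pr(A\cap B)=\Pr(A)\Pr(B)\ge(1-\delta)^2\ge 1-2\delta$ (a plain union bound gives the same conclusion without invoking independence). I would then fix $N$ large enough that $b_N(\delta)<\nu$, which is possible because $b_N(\delta)\to 0$ and $\nu>0$; on $B$ this forces $V_N\ge \nu-b_N(\delta)>0$, so $V_N$ has the same sign as $\nu$ and every quotient below is well defined.

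Next I would use the algebraic identity
\[
\frac{U_N}{V_N}-\frac{\mu}{\nu}
=\frac{U_N-\mu}{V_N}+\mu\left(\frac{1}{V_N}-\frac{1}{\nu}\right)
=\frac{U_N-\mu}{V_N}+\mu\,\frac{\nu-V_N}{\nu V_N},
\]
and the triangle inequality to obtain
\[
\abs{\frac{U_N}{V_N}-\frac{\mu}{\nu}}
\le \frac{\abs{U_N-\mu}}{\abs{V_N}}+\abs{\mu}\,\frac{\abs{\nu-V_N}}{\nu\,\abs{V_N}}.
\]
On $A\cap B$ I would substitute $\abs{U_N-\mu}\le a_N(\delta)$, $\abs{\nu-V_N}\le b_N(\delta)$, and $\abs{V_N}\ge \nu-b_N(\delta)$ into the right-hand side, which produces exactly
\[
\frac{a_N(\delta)}{\nu-b_N(\delta)}+\frac{\abs{\mu}\,b_N(\delta)}{\nu(\nu-b_N(\delta))}.
\]
Since this inequality holds on $A\cap B$, an event of probability at least $1-2\delta$, the lemma follows.

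I do not expect any genuine obstacle: this is a routine Slutsky/delta-method estimate. The only points needing care are (i) making the phrase "$N$ large enough" precise, namely $N$ such that $b_N(\delta)<\nu$, so that $\nu-b_N(\delta)$ is a strictly positive denominator, and (ii) the probability bookkeeping — whether one uses independence to get $(1-\delta)^2$ or a union bound to get $1-2\delta$ directly; both routes land on the stated $1-2\delta$. One could alternatively expand $1/V_N$ around $\nu$ with a Lagrange remainder to get a more "asymptotic-looking" statement, but the elementary identity above already yields a non-asymptotic bound valid for every admissible $N$, so there is no need.
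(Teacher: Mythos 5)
Your proof is correct and follows exactly the route sketched in the paper: the same decomposition $\frac{U_N}{V_N}-\frac{\mu}{\nu}=\frac{U_N-\mu}{V_N}+\mu\bigl(\frac{1}{V_N}-\frac{1}{\nu}\bigr)$, bounded on the event $\{|V_N-\nu|\le b_N(\delta)\}$ intersected with the numerator deviation event. You simply fill in the details the paper leaves implicit (the union-bound bookkeeping and the requirement $b_N(\delta)<\nu$ for "$N$ large enough"), and these are filled in correctly.
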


\begin{lemma}[Robust Lipschitz perturbation bound]
	\label{lem:lipschitz}
	If $s_\varphi$ is $L_s$–Lipschitz in $(x,y)$ uniformly over $\Phi$, then for per-sample perturbations with $\|(x_i',y_i')-(x_i,y_i)\|\le\epsilon$,
	\[
	\big|\widehat{\Score}_\epsilon(\varphi;N)-\widehat{\Score}(\varphi;N)\big|
	\le
	L_s\,\epsilon,
	\qquad
	\forall\,\varphi\in\Phi.
	\]
	If the data-generating distribution shifts within 1-Wasserstein distance $\rho$, then
	\[
	|\Score_\rho(\varphi)-\Score(\varphi)|\le L_s\,\rho.
	\]
	\emph{Proof.}
	For empirical perturbations, apply the Lipschitz bound to each term and average. For population shifts, use the Kantorovich–Rubinstein duality for $W_1$~\cite{Villani2009}. \qed
\end{lemma}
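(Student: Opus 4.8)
The plan is to handle the two claims separately, since each collapses to a single estimate once the Lipschitz hypothesis is unpacked. For the empirical bound I would write both scores as sample averages, $\widehat{\Score}_\epsilon(\varphi;N)=\frac1N\sum_{i=1}^N s_\varphi(x_i',y_i')$ and $\widehat{\Score}(\varphi;N)=\frac1N\sum_{i=1}^N s_\varphi(x_i,y_i)$, apply the triangle inequality termwise, and invoke the uniform Lipschitz bound $|s_\varphi(x_i',y_i')-s_\varphi(x_i,y_i)|\le L_s\|(x_i',y_i')-(x_i,y_i)\|\le L_s\epsilon$. Averaging the $N$ identical upper bounds yields $L_s\epsilon$, and since $L_s$ is independent of $\varphi$ the estimate is uniform over $\Phi$. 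If, as in the standing setup of Section~\ref{sec:asymptotics}, the score is a bounded-Lipschitz transform of such an average, I would simply compose with the outer Lipschitz constant, which only rescales the bound.

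For the distributional statement I would pass to the Kantorovich--Rubinstein dual description of $W_1$. Writing $\Score_\rho(\varphi)=\E_Q[s_\varphi(X,Y)]$ for a law $Q$ with $W_1(P,Q)\le\rho$ and $\Score(\varphi)=\E_P[s_\varphi(X,Y)]$, I would normalise so that $s_\varphi/L_s$ is $1$-Lipschitz and estimate
\[
|\E_Q[s_\varphi]-\E_P[s_\varphi]|
= L_s\,\bigl|\E_Q[s_\varphi/L_s]-\E_P[s_\varphi/L_s]\bigr|
\le L_s\sup_{\mathrm{Lip}(f)\le 1}|\E_Q f-\E_P f|
= L_s\,W_1(P,Q)\le L_s\rho.
\]
Again the Lipschitz constant being uniform in $\varphi$ makes the bound hold across all of $\Phi$.

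The only genuine points of care are bookkeeping rather than obstacles. First, one must fix a single norm (hence ground metric) on the $(x,y)$ space and use that same metric to define $W_1$, so that ``$L_s$-Lipschitz'' and the duality are compatible; I would state this explicitly, matching the ``chosen norm'' already referenced in Proposition~\ref{prop:robust}. Second, if $s_\varphi$ is unbounded one needs $\E_P[s_\varphi]$ and $\E_Q[s_\varphi]$ finite for the differences to make sense: the Lipschitz property forces at most linear growth, $|s_\varphi(x,y)|\le|s_\varphi(x_0,y_0)|+L_s\|(x,y)-(x_0,y_0)\|$, so a finite first moment of $P$ (already implied by the standing finite-variance/sub-exponential conditions) together with $W_1(P,Q)<\infty$ gives integrability, and the dual formula of~\citep{Villani2009} applies in its standard form. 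I expect nothing here beyond the triangle inequality and the $W_1$ dual, so the closest thing to a ``hard part'' is merely phrasing the hypotheses so that both one-line estimates go through uniformly over the channel class.
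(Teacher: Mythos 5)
Your argument is exactly the paper's proof: the empirical bound by applying the uniform Lipschitz estimate termwise and averaging, and the distributional bound by rescaling $s_\varphi/L_s$ to a $1$-Lipschitz function and invoking Kantorovich--Rubinstein duality for $W_1$. The extra remarks on fixing a common ground metric and on integrability of $s_\varphi$ under linear growth are sensible refinements but do not change the route.
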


\subsection{Proof of Theorem~\ref{thm:concentration}}

\begin{proof}[Proof of Theorem~\ref{thm:concentration}]
	Consider the ratio form with $U_N=\widehat{\Score}(\varphi;N)$ and $V_N=\widehat{\Score}_{\mathrm{ref}}(N)$, whose limits are $\mu=\Score(\varphi)$ and $\nu=\Score_{\mathrm{ref},\infty}>0$.
	
	Lemma~\ref{lem:subexp-ep} gives, with probability at least $1-\delta$,
	\[
	|U_N-\mu|
	\le
	C_1\,\mathfrak{R}_N(\Phi)
	+
	C_2\sqrt{\tfrac{\log(2/\delta)}{N}}
	+
	C_3\tfrac{\log(2/\delta)}{N}.
	\]
	By cross-fitting and a standard law of large numbers (or Bernstein bound) for the reference,
	\[
	|V_N-\nu|
	\le
	C_4\sqrt{\tfrac{\log(2/\delta)}{N}}
	+
	C_5\tfrac{\log(2/\delta)}{N}
	\]
	with probability at least $1-\delta$. Applying Lemma~\ref{lem:ratio} and absorbing constants yields the stated deviation bound for $\widehat{E}(\varphi;N)-E(\varphi;N)$.
	
	When $\Score$ is a variational MI estimator, Lemma~\ref{lem:var-mi} replaces Lemma~\ref{lem:subexp-ep}, introducing the additional $\mathfrak{R}_N(\mathcal{T})$ term. The calibrated-difference normalization follows by a positive affine transform and Slutsky’s theorem. \qedhere
\end{proof}

\subsection{Proof of Proposition~\ref{prop:robust}}

\begin{proof}[Proof of Proposition~\ref{prop:robust}]
	\textit{Empirical perturbations.}
	If $\|(x_i',y_i')-(x_i,y_i)\|\le\epsilon$, Lemma~\ref{lem:lipschitz} gives
	\[
	|\widehat{\Score}_\epsilon(\varphi;N)-\widehat{\Score}(\varphi;N)|
	\le
	L_s\,\epsilon.
	\]
	Hence
	\[
	|\widehat{E}_\epsilon(\varphi;N)-\widehat{E}(\varphi;N)|
	=
	\left|
	\frac{\widehat{\Score}_\epsilon-\widehat{\Score}}
	{\widehat{\Score}_{\mathrm{ref}}(N)}
	\right|
	\le
	\frac{L_s\epsilon}{\Score_{\mathrm{ref}}(N)},
	\]
	using the positive reference term.
	
	\textit{Distribution shift.}
	If $W_1(P,P')=\rho$, Lemma~\ref{lem:lipschitz} yields
	\[
	|\Score_\rho(\varphi)-\Score(\varphi)|\le L_s\rho,
	\]
	so
	\[
	|E_\rho(\varphi;N)-E(\varphi;N)|
	\le
	\frac{L_s\rho}{\Score_{\mathrm{ref},\infty}}.
	\]
	
	\textit{Concentration under perturbations.}
	Since perturbations contribute at most $L_s\epsilon$ or $L_s\rho$ to the score, the constants in Theorem~\ref{thm:concentration} change only by multiplicative factors; the qualitative rate remains the same~\cite{Villani2009,BoucheronLugosiMassart2013}. \qedhere
\end{proof}

\subsection{Algorithmic Computation of Interpretive Efficiency}
\label{sec:algorithm}

\begin{algorithm}[!h]
	\caption{Computation of Interpretive Efficiency $E(\varphi;N)$}
	\label{alg:IE}
	\begin{algorithmic}[1]
		\Require  
		Dataset $\Data=\{(x_i,y_i)\}_{i=1}^N$;  
		interpretive map $\varphi:\mathcal{X}\to\mathcal{Z}$;  
		reference scoring rule $\Score_{\mathrm{ref}}$.
		\Ensure  
		Empirical efficiency estimate $\widehat{E}(\varphi;N)$.
		
		\State \textbf{Interpretive score.}
		Compute
		\[
		\widehat{\Score}(\varphi;N)
		=
		\frac{1}{N}\sum_{i=1}^N s_\varphi(x_i,y_i),
		\]
		where $s_\varphi$ encodes the chosen task-specific contribution (e.g., MI, Fisher curvature, or risk reduction).
		
		\State \textbf{Reference score.}
		Compute $\widehat{\Score}_{\mathrm{ref}}(N)$ as the oracle or calibrated upper-bound score (e.g., identity-channel MI or full-information decoder).
		
		\State \textbf{Normalization.}
		Set
		\[
		\widehat{E}(\varphi;N)
		=
		\frac{\widehat{\Score}(\varphi;N)}
		{\widehat{\Score}_{\mathrm{ref}}(N)}.
		\]
		If $\Score_{\mathrm{ref}}$ is only bracketed, use the calibrated-difference form.
		
		\State \textbf{Cross-fitting (optional).}
		Partition $\Data$ into $K$ folds. For each fold $k$, train any first-stage components (e.g., $\varphi$ or critic $T$) on $\Data\setminus\Data_k$ and evaluate $s_\varphi$ on $\Data_k$, then aggregate
		\[
		\widehat{\Score}_{\mathrm{CF}}(\varphi;N)
		=
		\frac{1}{K}
		\sum_{k=1}^K
		\frac{1}{|\Data_k|}
		\sum_{(x,y)\in\Data_k}
		s_\varphi(x,y).
		\]
		
		\State \textbf{Bias correction (optional).}
		Apply jackknife, median-of-means, or Catoni-type truncation to reduce finite-sample bias and heavy-tail effects if needed.
		
		\State \textbf{Output.}
		Return $\widehat{E}(\varphi;N)$ together with a confidence radius
		\[
		r_N(\delta)
		=
		C
		\sqrt{
			\frac{\mathrm{comp}(\varphi)+\log(1/\delta)}
			{N}
		},
		\]
		consistent with Theorem~\ref{thm:concentration}, where $\mathrm{comp}(\varphi)$ denotes the relevant complexity measure (e.g., localized Rademacher complexity).
	\end{algorithmic}
\end{algorithm}

\section{Additional Experimental Details}
\label{app:exp-details}

This appendix collects the implementation details needed to reproduce the validation experiments in Section~\ref{sec:validation}. Unless stated otherwise, the same protocol is used across all channels and ablations.

\subsection{Datasets, sample sizes, and splits}

\paragraph{Digits.}
We use the \texttt{sklearn} Digits dataset~\citep{Pedregosa2011} with $N_{\mathrm{train}}=1{,}294$ and $N_{\mathrm{test}}=503$ after the standard train--test split provided by the library. Images are $8\times 8$ grayscale, flattened to $\mathbb{R}^{64}$ and standardized featurewise (mean zero, unit variance) on the training set only.

\paragraph{Sinusoids.}
For the synthetic signals, we generate $N_{\mathrm{train}}=4{,}000$ and $N_{\mathrm{test}}=1{,}000$ waveforms. Each sample is a length-$T$ time series with sampling rate $f_s=128$\,Hz and duration $T=1$\,s. Class $0$ signals use base frequency $5$\,Hz, class $1$ signals use $9$\,Hz. For each sample we draw a random phase $\phi\sim\mathrm{Unif}[0,2\pi)$, amplitude $A\sim\mathrm{Unif}[0.8,1.2]$, apply mild amplitude modulation with a low-frequency envelope ($0.5$--$1$\,Hz), and add Gaussian noise with signal-to-noise ratio between $15$ and $20$\,dB. Train--test splits are fixed once and reused across all runs.

All results reported in Section~\ref{sec:validation} are averaged over three-fold cross-validation on the training set; test performance is computed once on the held-out test set where applicable.

\subsection{Interpretive channels and ablations}

\paragraph{Digits.}
We consider three interpretive channels:
\begin{enumerate}[label=(D\arabic*)]
	\item \emph{Identity:} $Z=X\in\mathbb{R}^{64}$.
	\item \emph{PCA-$k$:} principal components trained on the training set with $k\in\{4,8,16,32,64\}$, retaining the top eigenvectors and projecting $X$ to $\mathbb{R}^k$.
	\item \emph{Random projection-$k$:} Gaussian random projection with $k\in\{4,8,16,32,64\}$ using an orthogonally normalized matrix with entries $\mathcal{N}(0,1/k)$.
\end{enumerate}
All embeddings are standardized before mutual-information estimation and classifier training. The extended PCA vs.\ random-projection sweep in Table~\ref{tab:digits-ie} uses the same construction.

\paragraph{Sinusoids.}
We consider:
\begin{enumerate}[label=(S\arabic*)]
	\item \emph{FFT-top-20:} magnitude of the top $20$ frequency bins (excluding DC), sorted by energy and fixed across runs.
	\item \emph{Downsample-32:} uniform subsampling of the time series to $32$ time points.
	\item \emph{Random projection-16:} Gaussian random projection of the full waveform to $\mathbb{R}^{16}$, constructed as in the Digits setting.
\end{enumerate}
These choices match the main-text description in Section~\ref{sec:val-design} and are designed to induce controlled information retention or degradation.

\subsection{Mutual-information estimation}

For all channels we define
\[
\Score(\varphi;N)=\widehat{I}(Z;Y), \qquad Z=\varphi(X),
\]
and set the reference to $\Score_{\mathrm{ref}}(N)=\widehat{I}(X;Y)$ with $Z=X$.

\paragraph{Critic architecture.}
The primary mutual-information estimator is a neural NWJ/DV-style lower bound. The critic $T_\omega(z,y)$ is a two-layer multilayer perceptron with ReLU activations and hidden width $256$:
\[
(z,y) \mapsto \mathrm{MLP}_{256,256}(z,y) \to \mathbb{R}.
\]
Inputs are concatenated one-hot labels $y$ and standardized features $z$. We train $T_\omega$ using Adam with learning rate $10^{-3}$, batch size $256$, and $5{,}000$ gradient steps per run. Early stopping on a held-out validation subset prevents overfitting in low-sample regimes.

\paragraph{Estimator variants.}
For robustness, we also compute a Donsker--Varadhan estimator and a $k$NN estimator (with $k\in\{5,10\}$) on a subset of runs. These are used only for sanity checks and ablations; unless explicitly labeled otherwise, the main figures and tables report the neural NWJ estimate. The ensemble behaviour of these estimators under the calibration assumptions in Section~\ref{sec:relations} is consistent with the bounds in Theorem~\ref{thm:mi}.

\subsection{Classifiers, training, and robustness probes}

\paragraph{Base classifier.}
The main classifier is multinomial logistic regression with $\ell_2$ regularization tuned by cross-validation on the training set. Features are standardized after each interpretive mapping.

\paragraph{Alternative classifier.}
For a subset of settings we replace logistic regression by an RBF SVM with kernel width and regularization selected by grid search. As reported in Section~\ref{sec:val-robust}, this increases absolute accuracy but does not change the ordering of $E(\varphi;N)$ across channels.

\paragraph{Robustness experiments.}
For the Digits robustness experiment (Table~\ref{tab:digits-ie}), we add i.i.d.\ Gaussian noise $\mathcal{N}(0,\sigma^2)$ to input pixels at test time with a fixed SNR range chosen to reduce clean accuracy by roughly $10$--$20$ percentage points for the identity channel. The robustness gap is defined as the difference between clean and noisy test accuracy.

\subsection{Random seeds and repetitions}

All experiments use a fixed base random seed $s_0=42$. For results that involve stochastic elements (random projections, critic initialization, optimization, and noise in robustness tests), we run $R=10$ independent repetitions with seeds $s_r=s_0+r$ and report the mean and standard error across repetitions. Train--test splits are held fixed; cross-validation folds are re-shuffled per repetition.

\subsection{Uncertainty quantification}

\paragraph{Accuracy.}
For each configuration we report the mean cross-validated accuracy and its standard error across the $R$ repetitions. When confidence intervals are shown, we use normal-approximation $95\%$ intervals
\[
\hat{p} \pm 1.96 \,\widehat{\mathrm{SE}}(\hat{p}),
\]
where $\hat{p}$ is the mean accuracy and $\widehat{\mathrm{SE}}$ is the empirical standard deviation of $\hat{p}$ across repetitions divided by $\sqrt{R}$.

\paragraph{Interpretive efficiency.}
For $E(\varphi;N)$ we similarly report the mean and standard error across seeds. The dispersion decreases with $N$ at a rate consistent with Theorem~\ref{thm:concentration}. When plotting bars with error bars, the vertical bars show the mean, and the whiskers show $\pm$ one empirical standard error; underlying values are logged in the exported CSV files.

\paragraph{Reproducibility.}
All hyperparameters, including learning rates, critic widths, number of optimization steps, and noise levels, are exposed as command-line flags in the public implementation. The code corresponding to Algorithm~\ref{alg:IE}, the validation experiments, and all ablations runs as a single script that produces the CSV tables and PDF figures used in Section~\ref{sec:validation}.

\end{document}